\documentclass{article}

\usepackage{microtype}
\usepackage{graphicx}
\usepackage{subcaption}
\usepackage{booktabs} % for professional tables

\usepackage{threeparttable}
\usepackage{xcolor}
\definecolor{myblue}{HTML}{598BE7}

\usepackage{booktabs}
\usepackage{amssymb} % for \checkmark
\usepackage{pifont}  % for \ding{55}

\usepackage{multirow}
\usepackage{subcaption}
\usepackage{enumitem}
\usepackage{tabularew}

\usepackage{basic_commands}

\usepackage{hyperref}

\usepackage[preprint]{icml2026}

\usepackage{amsmath}
\usepackage{amssymb}
\usepackage{mathtools}
\usepackage{amsthm}

\usepackage{algorithm}

\newcommand{\algcomment}[1]{\textcolor{gray}{\footnotesize{$\triangleright$\ #1}}}

\newcommand{\T}{\mathcal{T}}
\newcommand{\chunk}{\text{c}}

\AddToHook{cmd/ttfamily/after}{\frenchspacing}
\definecolor{myblue}{HTML}{598BE7}

\theoremstyle{plain}
\newtheorem{theorem}{Theorem}[section]
\newtheorem{proposition}[theorem]{Proposition}
\newtheorem{lemma}[theorem]{Lemma}
\newtheorem{corollary}[theorem]{Corollary}
\theoremstyle{definition}
\newtheorem{definition}[theorem]{Definition}
\newtheorem{assumption}[theorem]{Assumption}
\theoremstyle{remark}
\newtheorem{remark}[theorem]{Remark}

\usepackage[capitalize,noabbrev]{cleveref} % load after theorem environments

\crefname{lemma}{Lemma}{Lemmas}   % lowercase
\Crefname{lemma}{Lemma}{Lemmas}   % capitalized

\crefname{theorem}{Theorem}{Theorems}
\Crefname{theorem}{Theorem}{Theorems}

\crefname{proposition}{Proposition}{Propositions}
\Crefname{proposition}{Proposition}{Propositions}

\crefname{corollary}{Corollary}{Corollaries}
\Crefname{corollary}{Corollary}{Corollaries}

\crefname{equation}{Eq.}{Eqs.}
\Crefname{equation}{Eq.}{Eqs.}

\usepackage[textsize=tiny]{todonotes}

\icmltitlerunning{Chunk-Guided Q-Learning}

\begin{document}

\twocolumn[ 
  \icmltitle{Chunk-Guided Q-Learning}
  % It is OKAY to include author information, even for blind submissions: the
  % style file will automatically remove it for you unless you've provided
  % the [accepted] option to the icml2026 package.

  % List of affiliations: The first argument should be a (short) identifier you
  % will use later to specify author affiliations Academic affiliations
  % should list Department, University, City, Region, Country Industry
  % affiliations should list Company, City, Region, Country

  % You can specify symbols, otherwise they are numbered in order. Ideally, you
  % should not use this facility. Affiliations will be numbered in order of
  % appearance and this is the preferred way.
  \icmlsetsymbol{equal}{*}

  \begin{icmlauthorlist}
    \icmlauthor{Gwanwoo Song}{yonsei}
    \icmlauthor{Kwanyoung Park}{berkeley}
    \icmlauthor{Youngwoon Lee}{yonsei}
  \end{icmlauthorlist}

  \icmlaffiliation{yonsei}{Department of Artificial Intelligence, Yonsei University}
  \icmlaffiliation{berkeley}{UC Berkeley}

  \icmlcorrespondingauthor{Gwanwoo Song}{sgw1213@yonsei.ac.kr}

  % You may provide any keywords that you find helpful for describing your
  % paper; these are used to populate the "keywords" metadata in the PDF but
  % will not be shown in the document
  \icmlkeywords{Offline Reinforcement Learning}

  \vskip 0.3in
]

% this must go after the closing bracket ] following \twocolumn[ ...

% This command actually creates the footnote in the first column listing the
% affiliations and the copyright notice. The command takes one argument, which
% is text to display at the start of the footnote. The \icmlEqualContribution
% command is standard text for equal contribution. Remove it (just {}) if you
% do not need this facility.

% Use ONE of the following lines. DO NOT remove the command.
% If you have no special notice, KEEP empty braces:
\printAffiliationsAndNotice{}  % no special notice (required even if empty)
% Or, if applicable, use the standard equal contribution text:
% \printAffiliationsAndNotice{\icmlEqualContribution}

\begin{abstract}
     In offline reinforcement learning (RL), single-step temporal-difference (TD) learning can suffer from bootstrapping error accumulation over long horizons. Action-chunked TD methods mitigate this by backing up over multiple steps, but can introduce suboptimality by restricting the policy class to open-loop action sequences. To resolve this trade-off, we present Chunk-Guided Q-Learning (CGQ), a single-step TD algorithm that guides a fine-grained single-step critic by regularizing it toward a chunk-based critic trained using temporally extended backups. This reduces compounding error while preserving fine-grained value propagation. We theoretically show that CGQ attains tighter critic optimality bounds than either single-step or action-chunked TD learning alone. Empirically, CGQ achieves strong performance on challenging long-horizon OGBench tasks, often outperforming both single-step and action-chunked methods. Project page: \url{https://gwanwoosong.github.io/cgq}
\end{abstract}

\begin{figure*}[t]
    \centering
    \includegraphics[width=1.0\linewidth]{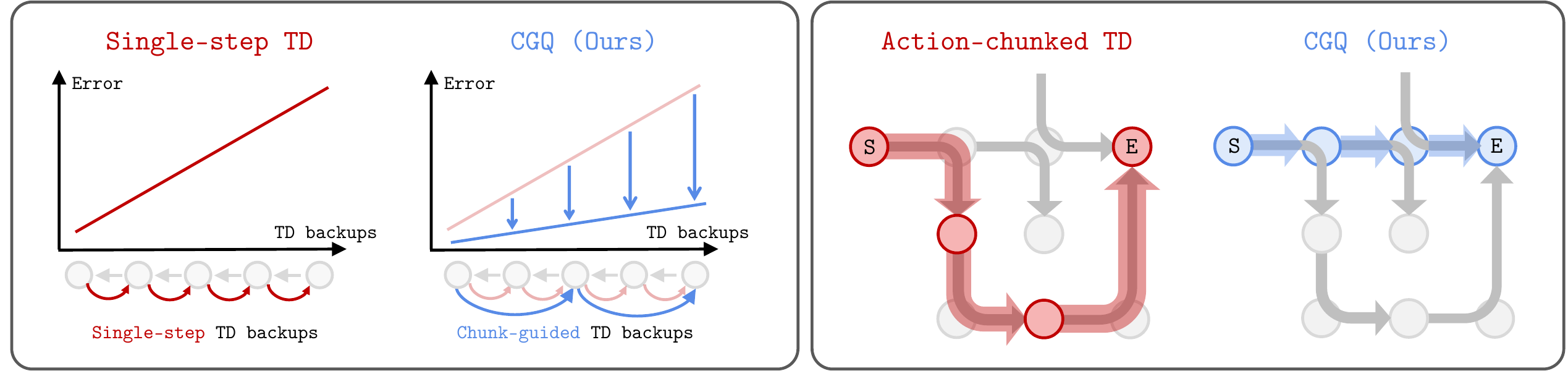}
    \vspace{-1em}
    \caption{
    \textbf{Advantage of Chunk-Guided Q-Learning (CGQ) over single-step and action-chunked TD learning.} 
    \textit{(Left)} Single-step TD learning can suffer from compounding error because it bootstraps from its own value estimates; CGQ mitigates this by guiding the critic toward an action-chunked critic trained with temporally extended backups. 
    \textit{(Right)} Action-chunked TD learning can be suboptimal (in red arrows) because chunked TD backups assume open-loop action sequences, limiting fine-grained stitching across trajectories (gray arrows denote dataset trajectories). In contrast, CGQ can recover the optimal reactive policy (in blue arrows) by retaining single-step TD learning.}
    \label{fig:teaser}
\end{figure*}

\section{Introduction}

Q-learning is a central approach in offline reinforcement learning (RL), yet it often struggles to learn accurate value functions on long-horizon tasks with sparse rewards~\citep{park2025horizon,park2024hiql,park2025leq}. A core issue lies in single-step temporal difference (TD) learning. Because the regression target depends on the critic's own value estimates, small errors compound across successive Bellman backups (\Cref{fig:teaser}, left)~\citep{sutton1998reinforcement,van2018deep}. In offline RL, this effect is amplified by limited data coverage, so target errors cannot be corrected by new interaction, and the accumulated error can overwhelm the learning signal and severely degrade performance as the horizon grows~\citep{park2025horizon}.

Motivated by the observation that \emph{horizon reduction} can mitigate bootstrapping error accumulation~\citep{park2024hiql,park2025horizon,li2025reinforcement}, we propose a simple yet effective way to leverage chunked backups by \emph{guiding} a single-step critic with a chunk-based critic, gaining long-range stability while retaining fine-grained single-step compositionality. 

Horizon reduction approaches reduce the effective backup depth using action chunking~\citep{li2025reinforcement,li2025decoupled}, $n$-step returns~\citep{park2025horizon}, or hierarchical RL~\citep{park2024hiql,park2025horizon}, and have been shown effective in long-horizon, sparse-reward tasks. Notably, \emph{action chunking} has been widely adopted in offline RL due to its simplicity and strong empirical performance~\citep{li2025reinforcement,li2025decoupled,park2025scalable,kim2025deas}.

However, this benefit comes at a cost in offline settings. In particular, $n$-step returns rely on multi-step off-policy rollouts, which can introduce bias when behavior and target policies differ~\citep{hernandez2019understanding}, leading to suboptimal value estimates in long-horizon tasks~\citep{park2025horizon}. Action chunking exhibits a related but distinct limitation: it assumes open-loop execution over the entire chunk and therefore cannot properly value reactive decisions within the chunk~\citep{li2025decoupled}, hindering fine-grained credit assignment and trajectory stitching (\Cref{fig:teaser}, right).

These limitations raise a natural question:
\begin{quote}
\vspace{-0.5em}
\emph{``Can we leverage the stability benefits of horizon reduction without sacrificing the optimality and compositionality of single-step TD learning?''}
\vspace{-0.5em}
\end{quote}

To address this challenge, we propose \textbf{Chunk-Guided Q-Learning (CGQ)}, which augments standard single-step TD learning with a regularizer that guides the critic toward an action-chunked critic trained via chunk-based TD updates. This combines the strengths of both views: the action-chunked critic provides a stable long-range learning signal, while the single-step critic preserves fine-grained value propagation and trajectory stitching.

Our contributions are threefold:
\begin{itemize}[itemsep=2pt,topsep=2pt,parsep=0pt]
    \item We introduce CGQ, an offline RL method that effectively mitigates TD error accumulation by regularizing a single-step critic toward a chunk-based critic.
    \item We provide theoretical results showing CGQ yields tighter critic optimality bounds than either single-step or action-chunked TD learning alone, and give an intuitive explanation of when CGQ is beneficial.
    \item We demonstrate that CGQ often outperforms both action-chunked and single-step methods on long-horizon OGBench~\citep{park2024ogbench} tasks, surpassing prior action-chunking approaches.
\end{itemize}

\section{Preliminaries}

\paragraph{Problem setting.} 
We consider a Markov Decision Process (MDP) defined as $\mathcal{M} = (\mathcal{S}, \mathcal{A}, p, r, \mu, \gamma)$, where $\mathcal{S}$ is the state space, $\mathcal{A}$ is the action space, $p(s' \mid s,a) \ : \ \mathcal{S} \  \times \mathcal{A} \to \Delta(\mathcal{S})$ is the transition dynamics, $r(s,a) : \mathcal{S} \  \times \mathcal{A} \to \mathbb{R}$ is the reward function, $\mu(s) \in \Delta(\mathcal{S})$ is the initial state distribution, and $\gamma \in (0,1)$ is the discount factor.\footnote{$\Delta(\mathcal{X})$ denotes the set of probability distributions on space $\mathcal{X}$.} 

We study the offline RL setting, where the goal is to learn a policy $\pi$ that maximizes $\mathbb{E}\left[\sum_{t=0}^{\infty} \gamma^t r(s_t,a_t)\right]$ given a fixed dataset of transitions $\mathcal{D} = \{(s, a, r, s')\}$ collected by a behavior policy $\pi_\mathcal{D}$, without further environment interactions. 

\paragraph{TD learning.}
We consider a parameterized policy $\pi_\theta(a\mid s)$ that will be optimized from the offline dataset and used for control. We learn a critic $Q_\phi(s, a)$ that estimates the discounted return, $\E_{\substack{s_0=s,\\a_0=a}} [\sum_{t=0}^{\infty} \gamma^t r(s_t,a_t)]$ by minimizing the one-step temporal difference (TD) loss~\citep{sutton1988learning}:
\begin{equation}
    \gL^\mathrm{TD}(\phi) = \E_{\substack{(s, a, r, s') \sim \gD, \\ a' \sim \pi_\theta(\cdot \mid s')}}
    \left[
        \left(Q_\phi(s, a) - r - \gamma Q_{\bar \phi}(s', a')\right)^2
    \right], 
    \label{eq:1_step_critic}
\end{equation}
where $Q_{\bar{\phi}}$ is a target network updated as an exponential moving average of $Q_\phi$~\citep{mnih2013playing}.

\paragraph{TD learning with action chunks.}
Instead of training a critic that evaluates a single action $a_t$, we also consider an action-chunked critic $Q_{\phi_c}(s_t, \mathbf{a}_{t})$ over $h$-step action sequences $\mathbf{a}_{t} = (a_t, a_{t+1}, \cdots, a_{t+h-1})$. Let $\mathbf{r}_{t}=\sum_{i=0}^{h-1}\gamma^i r(s_{t+i},a_{t+i})$ denote the $h$-step reward. We train $Q_{\phi_c}$ by minimizing the chunked version of \Cref{eq:1_step_critic}:
\begin{equation}
\begin{aligned}
\mathcal{L}^\mathrm{TD}_{c}(\phi_c)
= &
\mathbb{E}_{\substack{(s_{t}, \mathbf{a}_{t}, s_{t+h}) \sim \mathcal{D}, \\ \mathbf{a}_{t+h} \sim \pi_{\theta_c}(\cdot \mid s_{t+h})}}
\Big[
\big(Q_{\phi_c}(s_t, \mathbf{a}_{t}) \\ &  - \mathbf{r}_{t} - \gamma^{h}\,Q_{\bar{\phi}_c}(s_{t+h}, \mathbf{a}_{t+h})
\big)^2
\Big],
\end{aligned}
\end{equation}
where $Q_{\bar{\phi}_c}$ is a target network for $Q_{\phi_c}$, and $\pi_{\theta_c}(\mathbf{a}\mid s)$ is an action-chunked policy.
Since $Q_{\phi_c}(s_t, \mathbf{a}_{t})$ evaluates open-loop action sequences, greedy chunk selection may fail to recover the optimal reactive policy in the original MDP.

\paragraph{Flow Q-Learning (FQL).}
A common challenge in offline RL is that policy improvement can select out-of-distribution actions, leading to unreliable value estimates. FQL~\citep{park2025flow} addresses this by learning an expressive flow-based behavior policy and constraining policy improvement toward the behavior distribution. 

Concretely, FQL first fits a flow-based behavior policy $\pi_\theta(s,z):\mathcal{S} \times \mathcal{A} \rightarrow \mathcal{A}$ by learning a velocity field $v_{\theta}$ with a flow matching loss~\citep{flow_lipman2023, flow_albergo2023, flow_liu2023}
\begin{equation}
\begin{aligned}
\mathcal{L}^\mathrm{flow}(\theta) &= \E_{\substack {(s_t, a_{t}) \sim \mathcal{D}, \\ u \sim \mathrm{U}([0, 1]), \\ z\sim\mathcal{N}(0,I_{\mathcal{A}})}}\left[\| v_\theta(u, s_t, a_z) - (a_t - z) \|_2^2\right],
\label{eq:flow}
\end{aligned}
\end{equation}
where $a_z = (1-u)z+ua_t$. FQL then learns a constrained policy $\pi_\omega(s,z)$ by trading off Q-value maximization and distillation toward the flow-based behavior policy:
\begin{equation}
\begin{aligned}
    \gL^{\mathrm{FQL}}(\omega) = ~&\E_{\substack{s \sim \gD, z \sim \mathcal{N}(0, I_{\mathcal{A}})}}
    \big[
        \!-\!Q_\phi(s,\pi_\omega(s,z)) \\
        &+ \alpha \lVert\pi_\theta(s,z) - \pi_\omega(s,z)\rVert_2^2
    \big]. 
    \label{eq:fql}
\end{aligned}
\end{equation}
In this paper, we use FQL for single-step policy extraction.

\begin{figure*}[t]
    \centering
    \includegraphics[width=\linewidth]{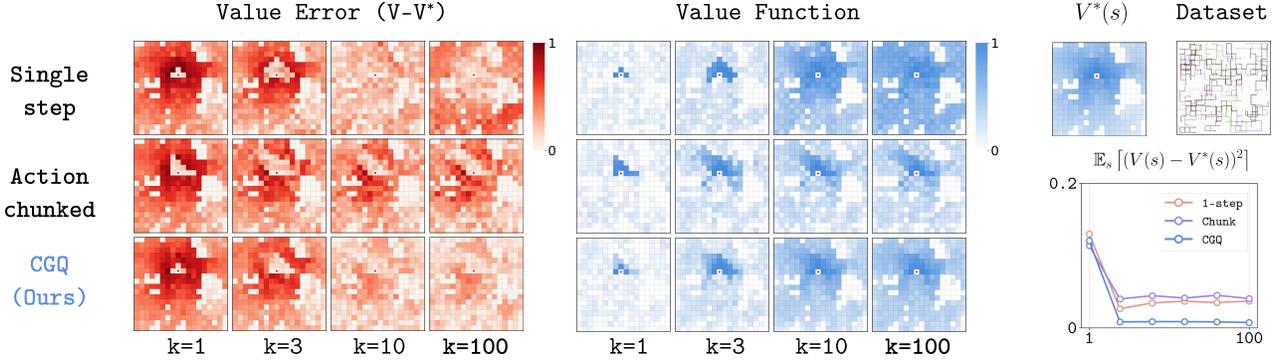}
    \vspace{-6pt}
    \caption{
    \textbf{CGQ improves value estimation over single-step and action-chunked TD learning.} 
    We compare single-step TD, action-chunked TD, and CGQ in a simple gridworld using a fixed offline dataset. To simulate function approximation error, we add noise to TD targets. The upper-right panel shows the optimal value function. The left panels visualize the value prediction error after $k \in \{1,3,10,100\}$ TD updates and the middle panels show the learned value functions. Single-step TD exhibits large errors in states far from the goal due to bootstrapping error accumulation over long backup chains, while action-chunked TD propagates values quickly but can converge to a suboptimal critic due to open-loop chunks, leaving persistent errors at intermediate states. In contrast, CGQ achieves lower error (see the lower-right plot) by combining rapid chunked value propagation with fine-grained single-step value backups.
    }
    \label{fig:value-backup}
\end{figure*}

\paragraph{FQL with action chunks (QC-FQL).}
FQL naturally extends to action-chunked Q-learning by operating on action sequences instead of single actions. QC-FQL~\citep{li2025reinforcement} learns an action-chunked policy and critic with chunk length $h$. Analogous to FQL, QC-FQL first fits a flow-based behavior policy $\pi_{\theta_c}(s,\mathbf{z}): \mathcal{S} \times \mathcal{A}^h \rightarrow \mathcal{A}^h$ by learning a velocity field $v_{\theta_c}$: 
\begin{equation}
\begin{aligned}
\mathcal{L}^\mathrm{flow}_c(\theta_c) &= \E_{\substack {(s_t, \mathbf{a}_{t}) \sim \mathcal{D}, \\ \mathbf{z} \sim \mathcal{N}(0, I_{\mathcal{A}^h}), \\ u \sim \mathrm{U}([0, 1])}}\left[\| v_{\theta_c}(u, s_t, \mathbf{a}_z) - (\mathbf{a}_t - z) \|_2^2\right],
\label{eq:flow_ac}
\end{aligned}
\end{equation}
where $\mathbf{a}_z = (1-u)z+u\mathbf{a}_t$.
QC-FQL then learns a constrained action-chunked policy $\pi_{\omega_c}(s,\mathbf{z})$ by optimizing
\begin{equation}
\begin{aligned}
    \gL^{\mathrm{FQL}}_c(\omega_c) = ~&\E_{\substack{s \sim \gD, \mathbf{z} \sim \mathcal{N}(0, I_{\mathcal{A}})}}
    \big[\!
        -\!Q_{\phi_c}(s,\pi_{\omega_c}(s,\mathbf{z})) \\
        &+ \alpha \lVert\pi_{\theta_c}(s,\mathbf{z}) - \pi_{\omega_c}(s,\mathbf{z})\rVert_2^2
    \big]. 
    \label{eq:qcfql}
\end{aligned}
\end{equation}
In this paper, we use QC-FQL for action-chunked policy extraction.

\section{Motivation: Trade-offs in Action-Chunked TD Learning}

Before introducing our method, we examine the complementary strengths and weaknesses of single-step and action-chunked TD learning, which our approach is designed to balance. \Cref{fig:value-backup} illustrates this trade-off in a simple gridworld. Single-step TD learning eventually converges to the correct value function, but requires many updates to propagate reward signal, making it vulnerable to bootstrapping error accumulation in long-horizon tasks. Action-chunked TD learning propagates values rapidly in early iterations, but its open-loop backups can fail to stitch across action chunks, leading to a suboptimal value function even after convergence.

\subsection{Action-Chunked TD Learning Reduces Bootstrapping Error Accumulation}

Single-step TD learning is vulnerable to bootstrapping error accumulation since the TD target, $r_t + \gamma Q(s_{t+1}, a_{t+1}),$ depends on the critic's own predictions. Errors in value estimates at future states can therefore propagate backward through successive Bellman updates. This issue is exacerbated as the horizon increases and reward information must be propagated through many consecutive bootstrapping steps.

Action-chunked TD learning partially mitigates this issue by reducing the effective horizon length (backup depth). For chunk size $h$, letting $\mathbf{a}_{t} = (a_t, a_{t+1}, \cdots, a_{t+h-1})$, the chunked TD target takes the form:
\begin{equation}
\sum_{k=0}^{h-1} \gamma^k r_{t+k} + \gamma^h Q(s_{t+h}, \mathbf{a}_{t+h}),
\end{equation}
which requires fewer bootstrapping steps to propagate reward information across long horizons. Although the target still depends on the critic, the error accumulation term scales with $\frac{1}{1-\gamma^h}$ rather than $\frac{1}{1-\gamma}$, which can yield up to an $h$-fold improvement when $\gamma \approx 1$ (see \Cref{theorem:1bound}). This reduction helps explain the fast initial value propagation observed for chunk-based TD learning in \Cref{fig:value-backup}.

\subsection{Action-Chunked Critics Can Be Suboptimal}

Despite its improved stability, action chunking can be structurally suboptimal relative to single-step TD learning because it restricts the policy class to open-loop action sequences. Let $Q^*(s_t, a_t)$ denote the optimal single-step Q-function and let $Q_c^*(s_t, \mathbf{a}_{t})$ as the optimal Q-function for an action-chunked policy. Since action chunking restricts the policy class to open-loop action sequences, in general 
\begin{equation}
    Q_c^*(s_t, \mathbf{a}_{t}) \leq Q^*(s_t, a_t).
\end{equation}
In stochastic environments where optimal behavior is reactive, this restriction can prevent precise action stitching across states, leading to persistent value inaccuracies at intermediate states. This effect is visible in \Cref{fig:value-backup}: despite reduced early propagation, chunk-based backups can converge to a value function that remains suboptimal in parts of the state space.

\paragraph{Implication.}
Taken together, these observations suggest a fundamental trade-off. Action-chunked critics provide more stable long-horizon value propagation by reducing backup depth, but can sacrifice optimality due to open-loop execution. Single-step TD learning supports fine-grained value updates and trajectory stitching, but is more susceptible to bootstrapping error accumulation over long horizons. This complementary structure motivates combining the two: \emph{leverage chunked learning signals for stability while retaining single-step structure for compositionality.}

\section{Chunk-Guided Q-Learning}

We introduce \textbf{Chunk-Guided Q-Learning (CGQ)}, a simple offline RL method that combines the long-horizon stability of action-chunked TD learning with the fine-grained optimality of single-step TD learning. CGQ regularizes the single-step critic toward a chunked critic trained with temporally extended backups, reducing compounding error while preserving step-wise updates. 

We emphasize that \emph{our goal is to train a single-step policy and critic}; the action-chunked policy and critic are trained only as auxiliary components to guide learning of the single-step critic. 

Throughout this paper, we use $\phi$ to denote critic parameters and $\theta, \omega$ to denote policy parameters. We use the subscript $(\cdot)_{\chunk}$ to indicate quantities associated with action-chunked critics, policies, or losses.

\subsection{Practical Implementation}

\paragraph{Action-chunked TD learning.}
We define an action-chunked critic $Q_{\phi_{\chunk}}(s_t, \mathbf{a}_{t})$ that evaluates an $h$-step action sequence $\mathbf{a}_{t} = (a_t, \dots, a_{t+h-1})$. Given an action-chunked policy $\pi_{\omega_c}(\mathbf{a}_{t} \mid s_t)$, we train the action-chunked critic $Q_{\phi_{\chunk}}$ with the action-chunked TD loss:
\begin{equation}
\begin{aligned}
\mathcal{L}^{\mathrm{TD}}_{\chunk}(\phi_{\chunk}) = \mathbb{E}_{(s_t, \mathbf{a}_{t}) \sim \mathcal{D}} \Big[ \big( &Q_{\phi_c}(s_t, \mathbf{a}_{t}) - \mathbf{r}_t \\
&- \gamma^h {Q}_{\bar{\phi_c}}(s_{t+h}, \mathbf{a}_{t+h}) \big)^2 \Big],
\end{aligned}
\label{eq:critic_loss}
\end{equation}
where $\mathbf{r}_t = \sum_{k=0}^{h-1} \gamma^k r_{t+k}$ and $\mathbf{a}_{t+h} \sim \pi_{\omega_c}(\cdot \mid s_{t+h})$. 

While the action-chunked critic can be suboptimal, it typically reduces bootstrapping error accumulation. We optimize $\pi_{\omega_c}$ using QC-FQL~\citep{li2025reinforcement} for its simplicity and robustness across tasks.

\paragraph{Single-step TD learning.}
We train a single-step critic $Q_{\phi}(s,a)$ with the standard TD objective:
\begin{equation} 
    \mathcal{L}^{\mathrm{TD}}(\phi) = \mathbb{E}_{\substack{(s,a,r',s') \sim \mathcal{D}, \\ a' \sim \pi_\omega(\cdot \mid s')}}
    \left[ \left( Q_{\phi}(s, a) - r - \gamma Q_{\bar\phi}(s', a') \right)^2 \right],
\end{equation}
where $\bar{\phi}$ is a target network updated as an exponential moving average of $\phi$~\citep{mnih2013playing}. Single-step TD enables fine-grained value propagation across states, but suffers from bootstrapping error accumulation over long horizons.

\paragraph{Critic regularization via action-chunked critic.}
To combine the complementary strengths of both critics, CGQ regularizes the single-step critic $Q_\phi(s_t,a_t)$ toward the action-chunked critic $Q_{\phi_{\chunk}}(s_t,\mathbf{a}_{t})$ using an \textit{upper-expectile} distillation loss:
\begin{equation}
\mathcal{L}^{\mathrm{reg}}(\phi) = \mathbb{E}_{(s_t, \mathbf{a}_{t}) \sim \mathcal{D}} \Big[ \ell_\tau \big( Q_{\phi_{\chunk}}(s_t, \mathbf{a}_{t}) - Q_\phi(s_t, a_t) \big) \Big].
\end{equation}
where $\ell_\tau(u)=\lvert\tau-\mathbb{I}(u<0)\rvert u^2$, with $0.5 \leq \tau < 1$ ~\citep{kostrikov2022iql}. This asymmetric alignment biases the single-step critic toward higher-value estimates from the chunked critic, allowing the single-step critic to be guided by the action-chunked critic, while being less affected from the suboptimality of the action-chunked critic.

\paragraph{Overall objective.}
We train the single-step critic $Q_{\phi}$ by minimizing
\begin{equation}
    \mathcal{L}^\mathrm{CGQ}(\phi)
    =
    \mathcal{L}^{\mathrm{TD}}(\phi)
    + \beta \mathcal{L}^{\mathrm{reg}}(\phi),
\end{equation}
where $\beta$ controls the strength of guidance from the action-chunked critic. We then extract the single-step policy $\pi_{\omega}$ from $Q_{\phi}$ using the FQL policy extraction~\citep{park2025flow}. The overall algorithm is described in \Cref{alg:cgq}.

\begin{algorithm}[t]
   \caption{Chunk-Guided Q-Learning (CGQ)}
   \label{alg:cgq}
\begin{algorithmic}[1]
   \REQUIRE Offline dataset $\mathcal{D}$, action chunk size $h$
   \STATE Initialize single-step critic $Q_\phi$, behavioral flow policy $\pi_\theta$, policy $\pi_\omega$
   \STATE Initialize action-chunked critic $Q_{\phi_{c}}$, behavioral flow policy $\pi_{\theta_c}$, policy $\pi_{\omega_c}$
   \WHILE{not converged}
      \STATE Sample $\{(s_t, \mathbf{a}_t, r_{t:t+h}, s_{t+1}, s_{t+h})\} \sim \mathcal{D}$
      \STATE \algcomment{Train action-chunked policy $\pi_{\omega_c}$ and critic $Q_{\phi_c}$}
      \STATE Update chunked critic $Q_{\phi_{\chunk}}$ to minimize $\mathcal{L}^{\mathrm{TD}}_{\chunk}(\phi_{\chunk})$
      \STATE Update chunked policies $\pi_{\theta_c}, \pi_{\omega_c}$ with \Cref{eq:flow_ac,eq:qcfql}
      \STATE \algcomment{Train single-step policy $\pi_{\omega}$ and critic $Q_{\phi}$}
      \STATE Update critic $Q_\phi$ to minimize $\mathcal{L}^{\mathrm{TD}}(\phi) + \beta \mathcal{L}^{\mathrm{reg}}(\phi)$
      \STATE Update policies $\pi_{\theta}, \pi_{\omega}$ with \Cref{eq:flow,eq:fql}
   \ENDWHILE
   \STATE \textbf{return} single-step policy $\pi_{\omega}$
\end{algorithmic}
\end{algorithm}

\subsection{Theoretical Analysis}

We now provide a theoretical analysis of CGQ to formalize the trade-off introduced by chunk guidance. In particular, we study how regularizing single-step TD learning toward an action-chunked critic affects the optimality and error propagation of the learned value function.

Minimizing the one-step Bellman residual $\|\T Q-Q\|$ should recover the optimal action-value function $Q^*$, since $Q^*$ is the unique fixed point of the Bellman operator $\T Q(s,a)=\E[r(s,a)+\gamma \max_{a'} Q(s', a')]$. In practice, however, the Bellman evaluation is noisy due to function approximation, finite data coverage, and other sources of error. To model this behavior, we consider a 
stochastic iterative process defined as
\begin{equation}
\widehat{\T} Q_k = \T Q_k + \varepsilon_k,
\end{equation}
where $\varepsilon_k$ is a zero-mean random perturbation satisfying
$\mathbb{E}\!\left[\|\varepsilon_k\|^2\right] \leq \sigma^2$ for some $\sigma > 0$.

Let $Q^*$ denote the fixed point of $\mathcal{T}$. Under a contraction assumption of operator $\T$, we obtain the following asymptotic error bound.

\begin{theorem}[\textbf{Error accumulation of one-step TD learning}]
Let $\T$ be a $\gamma$-Lipschitz linear operator in $L_2$ norm, and $\widehat{\T}$ be a stochastic iterative process defined as above. Then, the asymptotic expected squared error satisfies:
\begin{equation}
\limsup_{k \rightarrow  \infty} \mathbb{E}\!\left[ \|Q_k - Q^*\|^2 \right] \leq \frac{\sigma^2}{1-\gamma^2}
\end{equation}
\label{theorem:1bound}
\end{theorem}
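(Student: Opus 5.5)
We will prove the bound by tracking the error $e_k := Q_k - Q^*$ along the iteration $Q_{k+1} = \widehat{\T} Q_k = \T Q_k + \varepsilon_k$ and deriving a one-step recursion for $\E[\|e_k\|^2]$. Since $Q^*$ is the fixed point of $\T$, we write
\begin{equation*}
e_{k+1} = \T Q_k - \T Q^* + \varepsilon_k .
\end{equation*}
Here $\T$ is taken to be linear (in the affine sense, i.e.\ $\T Q = r + \gamma P Q$ for a fixed evaluation operator, so differences $\T Q_k - \T Q^*$ depend only on $Q_k - Q^*$). With this reading, the $\gamma$-Lipschitz assumption gives $\|\T Q_k - \T Q^*\| \le \gamma \|e_k\|$.

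The key step is to expand the square and kill the cross term:
\begin{equation*}
\|e_{k+1}\|^2 = \|\T Q_k - \T Q^*\|^2 + 2\langle \T Q_k - \T Q^*, \varepsilon_k\rangle + \|\varepsilon_k\|^2 .
\end{equation*}
Let $\mathcal{F}_k$ be the sigma-algebra generated by $\varepsilon_0,\dots,\varepsilon_{k-1}$. Then $Q_k$ is $\mathcal{F}_k$-measurable. We interpret ``zero-mean'' as $\E[\varepsilon_k \mid \mathcal{F}_k] = 0$ (a martingale-difference noise, which is the natural reading for a stochastic iterative process). Under this reading the cross term has zero expectation after conditioning on $\mathcal{F}_k$. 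Taking expectations and using $\E\|\varepsilon_k\|^2 \le \sigma^2$ yields
\begin{equation*}
a_{k+1} \le \gamma^2 a_k + \sigma^2, \qquad a_k := \E\|e_k\|^2 .
\end{equation*}

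Unrolling this recursion gives
\begin{equation*}
a_k \le \gamma^{2k} a_0 + \sigma^2 \sum_{j=0}^{k-1} \gamma^{2j} \le \gamma^{2k} a_0 + \frac{\sigma^2}{1-\gamma^2}.
\end{equation*}
Since $\gamma<1$, the first term vanishes as $k\to\infty$ (assuming $a_0<\infty$), and taking $\limsup$ gives exactly $\sigma^2/(1-\gamma^2)$.

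The main obstacle is justifying that the cross term vanishes. Two things are needed for this:
\begin{itemize}
\item The noise must be mean-zero conditionally on the past, not merely unconditionally. I would state this explicitly as the interpretation of ``zero-mean random perturbation,'' e.g.\ $\varepsilon_k$ independent of $Q_k$.
\item The $L_2$ norm must be induced by an inner product, so that the square expands as above.
\end{itemize}

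Linearity of $\T$ is not strictly needed once the Lipschitz property is applied to the difference $\T Q_k - \T Q^*$. It does, however, reassure that $\T Q_k - \T Q^*$ is a deterministic function of $e_k$. If only unconditional zero mean were available, the fallback would be the cruder bound $\|e_{k+1}\| \le \gamma\|e_k\| + \|\varepsilon_k\|$, which only yields $\sigma^2/(1-\gamma)^2$. That is why the conditional-mean assumption is essential for the stated constant.
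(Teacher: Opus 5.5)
Your proof is correct and follows the paper's argument essentially step for step: expand $\|(\T Q_k - \T Q^*) + \varepsilon_k\|^2$, drop the cross term because $\varepsilon_k$ is zero-mean and independent of $Q_k$, apply the $\gamma$-Lipschitz bound to get $a_{k+1} \le \gamma^2 a_k + \sigma^2$, and unroll the recursion. Your remarks that the noise must be conditionally (not merely unconditionally) zero-mean and that the norm must come from an inner product make explicit assumptions the paper uses implicitly; the paper does work in a weighted $L_2$ Hilbert space and invokes independence of $\varepsilon_k$ from $Q_k$.
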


\Cref{theorem:1bound} indicates that the error in each Bellman update accumulates geometrically, and therefore the overall error scales with $\mathcal{O}(\sqrt{1/(1-\gamma^2)})$. In long-horizon tasks, where $\gamma$ is close to 1, this error accumulation exacerbates.

Here, we prove how CGQ can improve this bound by adding a regularization towards the suboptimal but stable action-chunked critic. We note that for the following derivations, the regularization target does not need to be an action-chunked critic in general; however, we keep the notation of regularization target as $Q_{c}$, as we focus on the action-chunked critic in our method. To control interactions between stochastic noise and regularization bias, we additionally assume that the curvature of $\T$ is bounded.

\begin{theorem}[\textbf{Improved bound via CGQ regularization, informal}]
Let $\widehat{\T}_\beta$ be the regularized stochastic iterative process:
$\widehat{\T}_\beta Q_k=\frac{1}{1+\beta}\widehat{\T}Q_k+\frac{\beta}{1+\beta}Q_{c}$. Then, the asymptotic expected squared error satisfies:
\begin{equation}
\begin{aligned}
\limsup_{k \rightarrow  \infty} \mathbb{E}\!\left[ \|Q_k - Q^*\|^2 \right] \leq
\underbrace{\frac{\sigma^2}{(1+\beta)^2-\gamma^2}}_{\text{Error accumulation}} + \underbrace{\frac{\beta^2 \| Q^*-Q_c\|^2}{(1+\beta-\gamma)^2}}_{\text{Suboptimality}},\\
+\underbrace{\frac{L(1+\beta)\beta\|Q^*-Q_c\|\sigma^2}{((1+\beta)^2-\gamma^2)(1+\beta-\gamma)^2}}_{\text{Cross-term}}
\end{aligned}
\end{equation}
where $L$ denotes the maximum curvature of $\T$.

This bound is minimized at some $\beta^* < \infty$, being strictly smaller than $\|Q^* - Q_c\|^2$. Moreover, if (but not necessarily only if) $L\|Q_c - Q^*\| < \frac{2(1-\gamma)}{1+\gamma}$, this bound is minimized at some $0 < \beta^* < \infty$, and is strictly smaller than both $\frac{\sigma^2}{1-\gamma^2}$ and $\|Q^* - Q_c\|^2$. 
\label{theorem:CGQ_bound}
\end{theorem}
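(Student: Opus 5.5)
We propose to analyze the regularized iteration through its fixed point. Write $\T_\beta Q = \frac{1}{1+\beta}\T Q + \frac{\beta}{1+\beta} Q_c$ for the noiseless regularized operator. Since $\T$ is $\gamma$-Lipschitz, $\T_\beta$ is $\frac{\gamma}{1+\beta}$-Lipschitz, so it has a unique fixed point $Q^*_\beta$. We decompose
\[
\|Q_k - Q^*\| \le \|Q_k - Q^*_\beta\| + \|Q^*_\beta - Q^*\|
\]
and bound the two pieces separately: a variance-type term and a bias-type term. The cross-term will come from the fact that the noise enters nonlinearly when $\T$ has curvature. This is why the curvature bound $L$ is needed: for a linear $\T$, the zero-mean noise is uncorrelated with the current iterate and the cross term vanishes.

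\emph{Bias.} From $Q^*_\beta = \frac{1}{1+\beta}\T Q^*_\beta + \frac{\beta}{1+\beta}Q_c$ and $Q^* = \T Q^*$ we get
\[
(1+\beta)(Q^*_\beta - Q^*) = \T Q^*_\beta - \T Q^* + \beta(Q_c - Q^*).
\]
Taking norms and using $\gamma$-Lipschitzness gives $(1+\beta-\gamma)\|Q^*_\beta - Q^*\| \le \beta\|Q_c - Q^*\|$. This yields the suboptimality term $\beta^2\|Q^*-Q_c\|^2/(1+\beta-\gamma)^2$.

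\emph{Variance.} Set $e_k = Q_k - Q^*_\beta$. Then
\[
e_{k+1} = \tfrac{1}{1+\beta}\bigl(\T Q_k - \T Q^*_\beta\bigr) + \tfrac{1}{1+\beta}\varepsilon_k .
\]
Expanding the square and using $\mathbb{E}[\varepsilon_k \mid Q_k] = 0$ gives
\[
\mathbb{E}\|e_{k+1}\|^2 \le \tfrac{\gamma^2}{(1+\beta)^2}\,\mathbb{E}\|e_k\|^2 + \tfrac{\sigma^2}{(1+\beta)^2}.
\]
Unrolling this recursion exactly as in \Cref{theorem:1bound} (which is the case $\beta=0$) gives $\limsup_k \mathbb{E}\|e_k\|^2 \le \sigma^2/((1+\beta)^2-\gamma^2)$.

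\emph{Combining the two.} The full error is $\mathbb{E}\|Q_k-Q^*\|^2 = \mathbb{E}\|e_k\|^2 + \|Q^*_\beta-Q^*\|^2 + 2\langle \mathbb{E} e_k, Q^*_\beta - Q^*\rangle$. The pieces are:
\begin{itemize}
\item The first two terms are the variance and bias bounds above.
\item The inner product is the cross term. The mean $\mathbb{E} e_k$ is nonzero only because $\mathbb{E}\,\T Q_k \ne \T\,\mathbb{E} Q_k$.
\item A second-order Taylor bound, $\|\mathbb{E}\T Q_k - \T\mathbb{E}Q_k\| \le \frac{L}{2}\mathbb{E}\|Q_k - \mathbb{E}Q_k\|^2$, gives a recursion for $\|\mathbb{E} e_k\|$ with contraction factor $\gamma/(1+\beta)$ and forcing term of order $L\sigma^2/((1+\beta)((1+\beta)^2-\gamma^2))$.
\item Solving that recursion yields $\limsup\|\mathbb{E}e_k\| \lesssim L\sigma^2/((1+\beta-\gamma)((1+\beta)^2-\gamma^2))$, up to $(1+\beta)$ factors.
\item Multiplying by $\|Q^*_\beta-Q^*\|\le \beta\|Q^*-Q_c\|/(1+\beta-\gamma)$ then produces the stated cross term, after adjusting constants.
\end{itemize}
This bookkeeping is the step I expect to be the main obstacle. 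Matching the exact power of $(1+\beta)$ and the constant in the cross term requires care, and I would accept a constant-factor looser version in the informal statement.

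\emph{Optimization over $\beta$.} Let $B(\beta)$ denote the right-hand side of the bound.
\begin{itemize}
\item As $\beta\to\infty$, the variance and cross terms vanish and the bias term tends to $\|Q^*-Q_c\|^2$ from below. Indeed, $\beta/(1+\beta-\gamma)<1$ increases to $1$, and the first two terms decay like $1/\beta^2$ while the bias gap decays like $1/\beta$. So $B(\beta)<\|Q^*-Q_c\|^2$ for large $\beta$, and the infimum is attained at a finite $\beta^*$.
\item For the second claim, differentiate at $\beta=0$. The variance term has derivative $-2\sigma^2/(1-\gamma^2)^2$, and the bias term has derivative $0$ since it is quadratic in $\beta$. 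The cross term has derivative $L\|Q^*-Q_c\|\sigma^2/((1-\gamma^2)(1-\gamma)^2)$.
\item Hence $B'(0)<0$ exactly when $L\|Q_c-Q^*\| < 2(1-\gamma)^2/(1-\gamma^2) = 2(1-\gamma)/(1+\gamma)$.
\item Under this condition $B(\beta)<B(0)=\sigma^2/(1-\gamma^2)$ for small $\beta>0$. Together with the large-$\beta$ behavior, this gives an interior minimizer $0<\beta^*<\infty$ beating both endpoints.
\end{itemize}
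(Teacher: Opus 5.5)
Your proposal is correct and follows essentially the same route as the paper. The shared steps are the fixed point $Q^*_\beta$ of the $\frac{\gamma}{1+\beta}$-contraction, the bias bound $\|Q^*_\beta-Q^*\|\le\frac{\beta}{1+\beta-\gamma}\|Q^*-Q_c\|$, the variance recursion, the exact squared-error expansion whose inner-product term is controlled by a curvature bound on $\|\mathbb{E}e_k\|$, and the sign of the derivative at $\beta=0$.

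The bookkeeping you flagged actually closes with room to spare. Your recursion $\|\mathbb{E}e_{k+1}\|\le\frac{\gamma}{1+\beta}\|\mathbb{E}e_k\|+\frac{L}{2(1+\beta)}\mathbb{E}\|e_k\|^2$ gives $\limsup_k\|\mathbb{E}e_k\|\le\frac{L}{2}\frac{\sigma^2}{((1+\beta)^2-\gamma^2)(1+\beta-\gamma)}$. This is a factor $1+\beta$ smaller than the paper's lemma, which keeps curvature $L$ rather than $L/(1+\beta)$ for $\T_\beta$. So the stated cross term follows with no loosening.
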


\Cref{theorem:CGQ_bound} makes explicit the bias–variance trade-off induced by chunk regularization. The first term captures error accumulation from noisy single-step TD updates, the second reflects bias from regularization toward a suboptimal critic, and the third term quantifies their interaction. The theorem shows that CGQ always gives tighter bound than action-chunked TD learning.

Importantly, whether an interior optimum exists (i.e., whether CGQ outperforms pure single-step TD learning) depends on the magnitude of the cross-term, which is determined by the curvature $L$ of the Bellman operator. We emphasize that this curvature condition is sufficient but not necessary; it can still have a nonzero finite minimizer $\beta^*$ with strictly smaller bound, even if the condition is not satisfied. Please see \Cref{sec:proof} for the formal statement and the proof of \Cref{theorem:CGQ_bound}.

\section{Experiments}

In this section, we evaluate CGQ on challenging long-horizon tasks and compare against strong single-step and action-chunked offline RL methods. We further analyze when chunk guidance helps via ablations on the regularization strength and chunk length.

\begin{table*}[t!]
\centering
\caption{\textbf{Results in OGBench manipulation and navigation benchmarks.} Entries report average success rate over 150 evaluation episodes, aggregated across 4 seeds. We report mean and standard deviation within each task category when available; entries without ``$\pm$'' are copied from prior work~\citep{park2025flow, li2025reinforcement, kim2025deas}. We highlight results within $95\%$ of the best performance. We note that DQC is evaluated under smaller, reward-based datasets in our setting (rather than the large goal-conditioned datasets used in the original paper), which may lead to lower performance than originally reported.}
\label{table:main}
\begin{tabularew}{l*{6}{>{\spew{.5}{+1}}r@{\,}l}}
\toprule
\texttt{Task Category}
& \multicolumn{2}{c}{\texttt{FQL}} & \multicolumn{2}{c}{\texttt{NFQL}} & \multicolumn{2}{c}{\texttt{QC-FQL}} & \multicolumn{2}{c}{\texttt{DEAS}} & \multicolumn{2}{c}{\texttt{DQC}} & \multicolumn{2}{c}{\texttt{\color{myblue}CGQ}} \\
\midrule
\texttt{scene-sparse ($\mathbf{5}$ tasks)} & \tt{57} & \phantom{\tiny$\pm$\tt{0}} & \tt{18} & \phantom{\tiny$\pm$\tt{0}} & \tt{84} & \phantom{\tiny$\pm$\tt{0}} & \tt{\color{myblue}93} & {\tiny$\pm$\tt{2}} & \tt{81} & {\tiny$\pm$\tt{2}} & \tt{86} & {\tiny$\pm$\tt{3}}\\
\texttt{cube-double ($\mathbf{5}$ tasks)} & \tt{29} & \phantom{\tiny$\pm$\tt{0}} & \tt{11} & \phantom{\tiny$\pm$\tt{0}} & \tt{39} & \phantom{\tiny$\pm$\tt{0}} & \tt{48} & \phantom{\tiny$\pm$\tt{0}} & \tt{31} & {\tiny$\pm$\tt{2}} & \tt{\color{myblue}69} & {\tiny$\pm$\tt{3}}\\
\texttt{puzzle-3x3-sparse ($\mathbf{5}$ tasks)} & \tt{\color{myblue}100} &  & \tt{\color{myblue}98} &  & \tt{63} & \phantom{\tiny$\pm$\tt{0}} & \tt{\color{myblue}99} & {\tiny$\pm$\tt{1}} & \tt{\color{myblue}100} & {\tiny$\pm$\tt{0}} & \tt{\color{myblue}99} & {\tiny$\pm$\tt{3}}\\
\texttt{puzzle-4x4 ($\mathbf{5}$ tasks)} & \tt{17} & \phantom{\tiny$\pm$\tt{0}} & \tt{23} & {\tiny$\pm$\tt{5}} & \tt{26} & {\tiny$\pm$\tt{2}} & \tt{\color{myblue}39} & {\tiny$\pm$\tt{3}} & \tt{8} & {\tiny$\pm$\tt{2}} & \tt{28} & {\tiny$\pm$\tt{2}}\\
\texttt{cube-triple ($\mathbf{5}$ tasks)} & \tt{10} & \phantom{\tiny$\pm$\tt{0}} & \tt{23} & \phantom{\tiny$\pm$\tt{0}} & \tt{83} & \phantom{\tiny$\pm$\tt{0}} & \tt{82} & \phantom{\tiny$\pm$\tt{0}} & \tt{2} & {\tiny$\pm$\tt{1}} & \tt{\color{myblue}94} & {\tiny$\pm$\tt{1}}\\
\texttt{cube-quadruple ($\mathbf{5}$ tasks)} & \tt{17} & \phantom{\tiny$\pm$\tt{0}} & \tt{36} & \phantom{\tiny$\pm$\tt{0}} & \tt{45} & \phantom{\tiny$\pm$\tt{0}} & \tt{64} & \phantom{\tiny$\pm$\tt{0}} & \tt{56} & {\tiny$\pm$\tt{22}} & \tt{\color{myblue}82} & {\tiny$\pm$\tt{2}}\\
\midrule
\texttt{Average (Manipulation)} & \tt{38} & \phantom{\tiny$\pm$\tt{0}} & \tt{35} & \phantom{\tiny$\pm$\tt{0}} & \tt{57} & \phantom{\tiny$\pm$\tt{0}} & \tt{71} & \phantom{\tiny$\pm$\tt{0}} & \tt{46} & {\tiny$\pm$\tt{3}} & \tt{\color{myblue}76} & {\tiny$\pm$\tt{2}}\\
\midrule
\midrule
\texttt{antmaze-large ($\mathbf{5}$ tasks)} & \tt{\color{myblue}79} &  & \tt{47} & {\tiny$\pm$\tt{5}} & \tt{20} & {\tiny$\pm$\tt{3}} & \tt{67} & {\tiny$\pm$\tt{3}} & \tt{71} & {\tiny$\pm$\tt{6}} & \tt{67} & {\tiny$\pm$\tt{3}}\\
\texttt{antmaze-giant ($\mathbf{5}$ tasks)} & \tt{9} &  & \tt{2} & {\tiny$\pm$\tt{1}} & \tt{0} & {\tiny$\pm$\tt{0}} & \tt{8} & {\tiny$\pm$\tt{1}} & \tt{\color{myblue}10} & {\tiny$\pm$\tt{1}}& \tt{4} & {\tiny$\pm$\tt{2}}\\
\texttt{humanoidmaze-medium ($\mathbf{5}$ tasks)} & \tt{58} &  & \tt{23} & {\tiny$\pm$\tt{3}} & \tt{4} & {\tiny$\pm$\tt{1}} & \tt{37} & {\tiny$\pm$\tt{3}} & \tt{\color{myblue}93} & {\tiny$\pm$\tt{2}} & \tt{34} & {\tiny$\pm$\tt{5}}\\
\midrule
\texttt{Average (Navigation)} & \tt{49} & \phantom{\tiny$\pm$\tt{0}} & \tt{24} & {\tiny$\pm$\tt{3}} & \tt{8} & {\tiny$\pm$\tt{2}} & \tt{37} & {\tiny$\pm$\tt{1}} & \tt{\color{myblue}58} & {\tiny$\pm$\tt{3}} & \tt{35} & {\tiny$\pm$\tt{4}}\\
%\midrule
%\texttt{Average}
%& $41$ & $31$ & $40$ & $38$  & $25$ & $\mathbf{59}$ \\
\bottomrule
\end{tabularew}
\end{table*}

\subsection{Experimental Setup}

We evaluate CGQ on the OGBench benchmark~\citep{park2024ogbench}, which contains challenging long-horizon tasks. For robotic manipulation, we consider six tasks: \texttt{cube-double}, \texttt{cube-triple}, \texttt{cube-quadruple}, \texttt{puzzle-3x3}, \texttt{puzzle-4x4}, and \texttt{scene}, and train on the \texttt{play} dataset. For navigation, we evaluate three tasks: \texttt{antmaze-large}, \texttt{antmaze-giant}, and \texttt{humanoidmaze-medium}, and train on the \texttt{navigate} dataset. Additional training details are provided in Appendix, \Cref{sec:training_details}.

\paragraph{Methods.} We compare against both single-step and action-chunked offline RL methods. For single-step methods, we consider FQL~\citep{park2025flow}, which learns a behavior flow policy and regularizes the one-step policy with the behavior policy, and its variant with $n$-step return, denoted as NFQL. For action-chunked methods, we consider QC~\citep{li2025reinforcement}, DEAS~\citep{kim2025deas}, and DQC~\citep{li2025decoupled}. QC adapts action-chunked critic and policy to existing single-step frameworks (e.g., FQL). DEAS decouples policy and value learning using expectile regression, following IQL~\citep{kostrikov2022iql}. DQC decouples chunk lengths for the critic and policy by distilling a large-chunk critic into a smaller-chunk critic for the policy.

\subsection{Results}

We report the performance of CGQ and baselines in \Cref{table:main} on OGBench manipulation and navigation tasks. 

\paragraph{Manipulation.} On manipulation tasks, CGQ achieves the best performance across all task categories,  outperforming single-step, $n$-step, and action-chunked offline RL methods. Notably, FQL performs well on relatively shorter-horizon tasks such as \texttt{puzzle-3x3-sparse} and \texttt{scene-sparse}, but degrades significantly on longer-horizon tasks including \texttt{puzzle-4x4}, \texttt{cube-triple}, and \texttt{cube-quadruple}. In contrast, action-chunked methods struggle on shorter-horizon tasks (e.g., \texttt{puzzle-3x3-sparse}) due to the restricted open-loop policy class. CGQ substantially outperforms in both regimes while using a single-step policy, indicating that chunking guidance successfully balances accurate step-wise value propagation with reduced bootstrapping error.

\textbf{Navigation (negative results).} On navigation tasks, CGQ outperforms or matches methods that use action-chunked policies, but underperforms FQL, which does not rely on an action-chunked critic. This is consistent with prior observations that action-chunked methods can be less effective in locomotion domains~\citep{park2025scalable,li2025decoupled}, where highly reactive control and fine-grained value estimation are crucial. We believe this is primarily due to the difficulty of action-chunked policy learning in locomotion, which in turn yields an unreliable chunked critic that corrupts the single-step critic via regularization. While DEAS employs IQL-style critic learning, it also suffers from this issue. DQC partially avoids this by decoupling the chunk sizes for the critic and the policy; notably, our empirical tuning selects a policy chunk size of 1 for all navigation tasks, thereby reducing the burden of chunked policy learning. Improving the chunked critic and developing adaptive weighting schemes for the regularizer are promising directions for future work. 

\subsection{Q\&As}

\textbf{Q: How is CGQ different from n-step methods, which also use single-step policy and critic?}

While CGQ and $n$-step methods both learn a single-step policy and critic, they differ fundamentally in how they incorporate horizon reduction. In $n$-step methods, horizon reduction is achieved by modifying the one-step TD target with multi-step returns. However, $n$-step targets typically still bootstrap from the same single-step critic at the tail, so long-horizon information remains entangled with the critic's own estimates. 

Although CGQ may appear superficially related to $\mathrm{TD}(\lambda)$~\cite{sutton1988learning}, in that both blend single-step learning with longer-horizon information, the key distinction is that $n$-step TD continues to rely on the single-step critic for bootstrapping, and can therefore inherit the same error accumulation issue. In contrast, CGQ introduces horizon reduction via an auxiliary action-chunked critic that is trained separately, avoiding additional error propagation from the TD target. Consistent with this distinction, NFQL, an $n$-step variant of FQL, performs substantially worse than CGQ (\Cref{table:main}).

\textbf{Q: Is CGQ the best way to blend action-chunked RL with single-step RL?}

To investigate this question, we evaluate several alternative designs for combining action-chunked and single-step value learning, and compare them against CGQ.

\begin{figure}[b]
    \centering
    \includegraphics[width=1.0\linewidth]{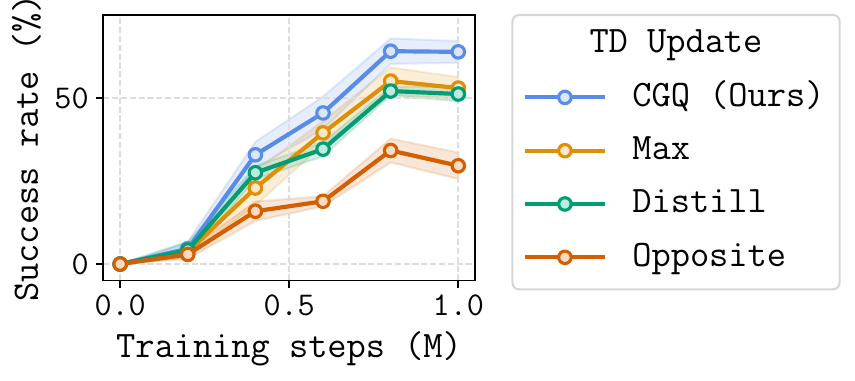}
    \vspace{-1em}
    \caption{
    \footnotesize
    \textbf{Performance of various TD update designs for blending single-step and multi-step TD learning.} CGQ's regularization yields the most effective critic.
    }
    \label{fig:exp_qa_3}
\end{figure}

\paragraph{\textsc{CGQ-Distill}.} This variant removes single-step TD learning and trains the critic using only the distillation loss, i.e., $ \mathcal{L}(\phi) = \mathcal{L}^{\mathrm{reg}}$, relying entirely on chunk-based value backups. It is analogous to DQC~\citep{li2025decoupled} with $h_a = 1$, and tests whether chunk guidance alone is sufficient for accurate value estimation.

\paragraph{\textsc{CGQ-Max}.} This variant constructs an optimistic target by taking the maximum of the single-step and chunk-based value estimates:
\begin{equation}
\begin{aligned}
\mathcal{L}^{\mathrm{max}}(\phi) = \ &
\mathbb{E}_{s,a,r,s',\mathbf{a} \sim \mathcal{D}, 
a' \sim \pi_\omega} \Big[ \Big(Q_\phi(s, a) \ - \\ & 
\max\big(r + \gamma\bar{Q}_\phi(s', a'),Q_{\phi_c}(s, \mathbf{a})\big)\Big)^2\Big].
\end{aligned}
\end{equation}
This tests whether aggressively following the larger of the single-step and chunk-based estimates improves performance.

\paragraph{\textsc{CGQ-Opposite}.}  This variant regularizes the action-chunked critic toward a single-step critic. While superficially symmetric, it takes the worst side of both; the chunked critic remains constrained by open-loop execution, and guidance from the single-step critic reintroduces sensitivity to bootstrapping errors.

As shown in \Cref{fig:exp_qa_3}, none of these alternatives matches CGQ. While CGQ is not necessarily the optimal or unique way to combine action-chunked and single-step RL, these results suggest that CGQ's simple regularization scheme provides an effective balance between reduced error accumulation and fine-grained single-step value propagation.

\textbf{Q: How does action-chunk size affect CGQ?}

We analyze the effect of the chunk size $h$ for the action-chunked TD learning by evaluating CGQ  with $h \in \{5, 10, 15, 20\}$ in \texttt{cube-double}. In CGQ, the chunk-based critic primarily serves to provide a more stable value backup with reduced bootstrapping error accumulation. Consequently, we expect larger chunk sizes to yield better regularization and improved performance. Consistent with this intuition, we observe that performance generally improves as the chunk size increases, as shown in \Cref{fig:ablation_horizon}.

\begin{figure}[ht]
    \centering
    \includegraphics[width=0.8\linewidth]{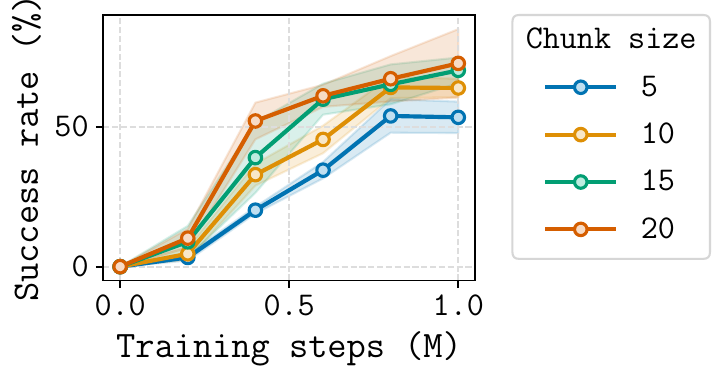}
    \vspace{-5pt}
    \caption{
    \footnotesize
    \textbf{Performance of CGQ across different action-chunk sizes $h$.} In general, larger chunks improve performance.
    }
    \label{fig:ablation_horizon}
\end{figure}

\textbf{Q: How important is the distillation coefficient $\beta$?}

We investigate the importance of the distillation coefficient $\beta$, which controls the strength of regularization towards the action-chunked critic. Specifically, we evaluate CGQ with  $\beta \in \{0.001, 0.01, 0.1, 1\}$ in \texttt{cube-double}, and report the result in \Cref{fig:ablation_beta}. We observe that the choice of $\beta$ is crucial to the performance: small values provide insufficient stabilization against bootstrapping error accumulation, while overly large values bias the critic too much toward the suboptimal chunk-based estimates.

\begin{figure}[h!]
    \centering
    \includegraphics[width=0.8\linewidth]{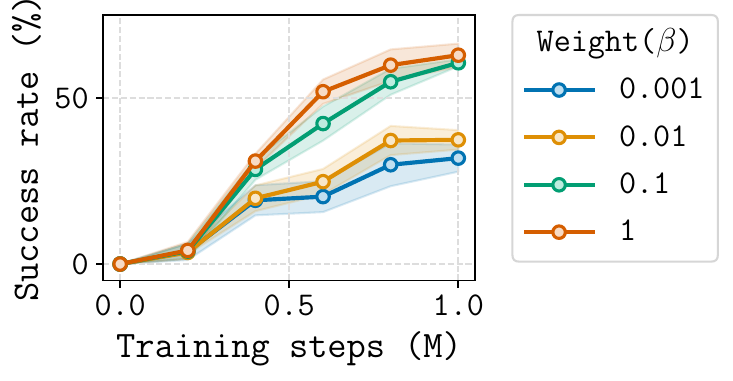}
    \vspace{-5pt}
    \caption{
    \footnotesize
    \textbf{Performance of CGQ under varying $\beta$.} Choice of $\beta$ is important for the performance.
    } 
    \label{fig:ablation_beta}
\end{figure}

\textbf{Q: Is expectile coefficient $\tau$ important?}

We study the effect of the expectile parameter $\tau$ used in the distillation loss by running CGQ in \texttt{cube-double} for $\tau \in \{0.5,0.6,0.7,0.8,0.9,0.95\}$. As shown in \Cref{fig:ablation_tau}, CGQ is robust to the choice of $\tau$: the performance is similar even when using $\tau = 0.5$ (i.e., without optimistic weighting), in contrast to prior observations in DQC~\citep{li2025decoupled}. We attribute this robustness CGQ using the action-chunked critic as a stabilizing regularizer, rather than relying on pure distillation, as in DQC. 

\begin{figure}[t]
    \centering
    \includegraphics[width=0.8\linewidth]{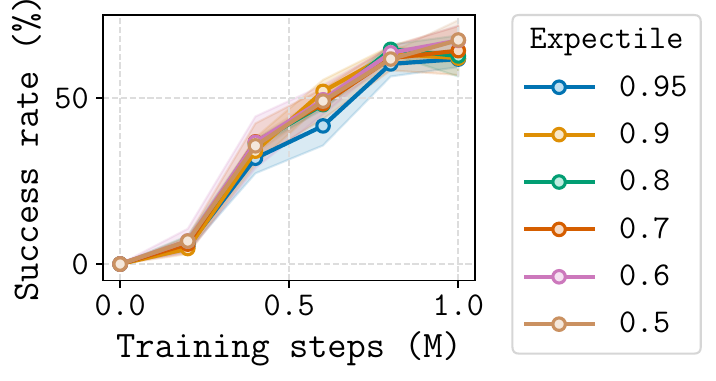}
    \vspace{-5pt}
    \caption{
    \footnotesize
    \textbf{Performance of CGQ under varying expectile coefficient $\tau$.} CGQ is robust to choice of $\tau$.
    } 
    \label{fig:ablation_tau}
    \vspace{-1em}
\end{figure}

\section{Related Work}

\paragraph{Offline RL.}
Offline RL aims to learn a policy from a fixed dataset without further environment interaction. The main challenge is distributional shift \citep{levine2020offline}, maximizing return often pushes the learned policy away from the data distribution, making value estimation unreliable. Prior work has explored several strategies have been explored to mitigate this issue, including behavior regularization 
\citep{wu2019behavior,peng2019advantage,fujimoto2021minimalist,tarasov2023revisiting,park2025flow}, conservative value estimation \citep{kumar2020conservative}, in-sample maximization \citep{kostrikov2022iql,xiao2023sample}, rejection sampling \citep{chen2022offline,hansen2023idql}, and other approaches such as sequence-modeling~\citep{chen2021decision,janner2021offline,jiang2023efficient} or model-based methods \citep{kidambi2020morel,yu2020mopo,yu2021combo,rosmo}, and more \citep{eysenbach2022contrastive,wang2023optimal,an2021uncertainty,brandfonbrener2021offline,myers2025offline,park2025transitive}.

Despite these advances, offline RL remains challenging in long-horizon tasks due to error accumulation: small value estimation errors can compound across successive Bellman backups when regression targets depend on the model’s own predictions~\citep{park2025horizon,sutton1998reinforcement}. Recent work has shown that \emph{horizon-reduction} techniques can effectively mitigate bootstrapping error accumulation by shortening the effective backup depth, including action chunking~\citep{li2025reinforcement,li2025decoupled}, $n$-step returns~\citep{park2025horizon}, and hierarchical RL~\citep{park2024hiql,park2025horizon}. Our method follows this line of work but introduces a novel approach that balances single-step TD learning with horizon-reduced TD learning, combining their complementary strengths.

\paragraph{Action chunking for offline RL.}
Originally popularized in imitation learning \citep{zhao2023learning,chi2023diffusion,kim2024openvla}, action chunking has recently emerged as a practical horizon-reduction strategy for offline RL \citep{seo2024coarse,li2025reinforcement,kim2025deas,li2025decoupled}. In this framework, the policy predicts a sequence of actions $\mathbf{a}_{t}=(a_t,\ldots,a_{t+h})$, and the critic evaluates the value of an entire action chunk, $Q(s_t,\mathbf{a}_t)$, rather than a single action. By reducing the number of TD backups (by a factor of $h$), these methods improve robustness to the bias introduced by self-bootstrapping during value propagation.

However, action chunking imposes several challenges in offline RL. For example, by enlarging the action space, action chunking complicates policy optimization and exacerbates value overestimation. Prior work mitigates these issues through different mechanisms: QC \citep{li2025reinforcement} constrains policies to limit distributional shift via FQL-based extraction \citep{park2025flow} or rejection sampling \citep{chen2022offline}; DEAS \citep{kim2025deas} decouples policy and value learning using expectile regression, following IQL \citep{kostrikov2022iql}; and DQC \citep{li2025decoupled} highlights the difficulty of predicting long action sequences and instead performs implicit value learning over large chunks, and distill into a critic with a smaller chunk. However, action-chunked critics have a fundamental limitation: they fail to account for reactive policies, leading to suboptimal value estimation even if perfectly optimized. Addressing this limitation is the focus of this paper.

\section{Closing Remarks}

We presented Chunk-Guided Q-Learning (CGQ), a single-step offline RL method that stabilizes value estimation by regularizing TD learning toward an auxiliary chunk-based critic. CGQ is designed to combine the complementary strengths of single-step and chunk-based TD learning: chunk-based value backups provide stable long-horizon learning signals with reduced compounding error, while single-step TD learning preserves critic optimality and supports reactive policy learning. Theoretically, we formalize the trade-off between the error accumulation of single-step TD learning and the potential suboptimality of chunk-based TD learning, and show that CGQ can find a sweet spot that can improve over either extreme. Empirically, CGQ achieves strong performance on challenging long-horizon OGBench tasks, often outperforming both single-step and action-chunked methods. 

CGQ also has limitations. When the action-chunked critic is inaccurate, the guidance can be not helpful and purely single-step methods may perform better. Moreover, CGQ depends on the regularization weight $\beta$ and selecting $\beta$ can be challenging when transferring to new tasks or multi-task settings. Promising directions for future work include improving the chunk-based guidance signal and developing adaptive or principled strategies for tuning $\beta$.

\section*{Acknowledgements}

This research was supported by the National Research Foundation of Korea (NRF) grant (RS-2024-00333634), the Institute of Information \& Communications Technology Planning \& Evaluation (IITP) grants (RS-2020-II201361, Artificial Intelligence Graduate School Program (Yonsei University); and RS-2024-00509279, Global AI Frontier Lab) funded by the Korean government (MSIT).

\section*{Impact Statement}

This work aims to improve long-horizon offline learning for autonomous agents, e.g., robots and self-driving cars. While the methods could be used in safety-critical settings, we do not anticipate immediate negative societal impacts from this paper.

\makeatletter
\def\enoteheading{%
  \section*{Notes}
  \vspace{-0\baselineskip}
}
\makeatother
\bibliography{conferences, main}
\bibliographystyle{icml2026}

%%%%%%%%%%%%%%%%%%%%%%%%%%%%%%%%%%%%%%%%%%%%%%%%%%%%%%%%%%%%%%%%%%%%%%%%%%%%%%%
%%%%%%%%%%%%%%%%%%%%%%%%%%%%%%%%%%%%%%%%%%%%%%%%%%%%%%%%%%%%%%%%%%%%%%%%%%%%%%%
% APPENDIX
%%%%%%%%%%%%%%%%%%%%%%%%%%%%%%%%%%%%%%%%%%%%%%%%%%%%%%%%%%%%%%%%%%%%%%%%%%%%%%%
%%%%%%%%%%%%%%%%%%%%%%%%%%%%%%%%%%%%%%%%%%%%%%%%%%%%%%%%%%%%%%%%%%%%%%%%%%%%%%%
\newpage
\appendix
\onecolumn

\section{Training Details} 
\label{sec:training_details}

We implement CGQ on top of the codebase of \citep{li2025reinforcement}. Each experiment tasks around 3 hours and approximately 18 hours for \texttt{cube-triple}, \texttt{cube-quadruple} tasks on a single RTX3090 GPU.  

\subsection{Benchmark and Datasets}
In this work, we consider 3 navigation tasksets and 6 manipulation tasksets from OGBench \citep{park2024ogbench} and use the \texttt{singletask} versions in our experiments (45 tasks in total). Reward setup in the navigation domain is sparse: a reward of -1 is given unless the task is complete, otherwise 0. For \texttt{scene} and \texttt{puzzle-3x3} domain in manipulation, we sparsify the rewards following\citep{li2025reinforcement}. For \texttt{cube-double}, \texttt{cube-triple}, \texttt{cube-quadruple}, \texttt{puzzle-4x4}, reward is calculated according to the number of completed subtasks (e.g., the number of correctly placed cubes or the number of correctly matched buttons)~\citep{park2024ogbench}.

\begin{table}[h]
    \centering
    \caption{\footnotesize \textbf{Metadata of each task categories used in OGBench experiments.} We report the dataset size, episode length, and action dimensionality for all 9 task sets used in our experiments. For the harder tasks, such as \texttt{cube-triple} and \texttt{cube-quadruple}, we use larger 10M/100M-sized dataset. }
    \begin{tabular}{@{}cccc@{}}
        \toprule
        \textbf{Task Category} & \textbf{Dataset Size}  & \textbf{Episode Length} & \textbf{Action Dimension)} \\
        \midrule
        \texttt{antmaze-large} &  $1$M & $1000$ & $8$\\       
        \texttt{antmaze-giant} &  $1$M & $2000$ & $8$\\
        \texttt{humanoidmaze-medium} &  $2$M (default) & $2000$ & $21$\\
        \texttt{scene-sparse}       &  $1$M & $750$ & $5$\\
        \texttt{puzzle-3x3-sparse}  &  $1$M & $500$ & $5$\\
        \texttt{cube-double} &  $1$M & $500$ & $5$\\
        \texttt{puzzle-4x4} &  $1$M & $500$ & $5$\\       
        \texttt{cube-triple} &  $10$M & $1000$ & $5$\\
        \texttt{cube-quadruple} &  $100$M & $1000$ & $5$\\
    \bottomrule
    \end{tabular}  
    \label{tab:metadata}
\end{table}

\subsection{Training and Evaluation Protocol}

We train our method and baselines for 1M gradient steps when we use 1M or 2M datasets, and 2.5M gradient steps when we use 10M and 100M datasets.
The 10M dataset is constructed from the first 10 files of the 100M dataset, following \citet{kim2025deas}. We report the average success rates over the final three evaluation epochs (800K/900K/1M for 1M gradient steps and 2.3M/2.4M/2.5M for 2.5M gradient steps), following \citep{park2025horizon}.

\paragraph{Results from prior works.} 

For tasks that are already evaluated in prior work, we use the reported results (i.e., numbers without $\pm$ in \Cref{table:main}). Specifically, for manipulation tasks, results of \texttt{cube-double}, \texttt{scene-sparse}, and \texttt{puzzle-3x3-sparse} are taken from \citep{li2025reinforcement}, except for DEAS on \texttt{cube-double}, where results are from \citep{kim2025deas}. Results for \texttt{cube-triple} and \texttt{cube-quadruple} are also obtained from \citep{kim2025deas}. For navigation tasks, FQL results are taken from \citep{park2025flow}. Otherwise, we implement the baselines in our codebase and evaluated according to our evaluation protocol.

\subsection{Implementation Details}

\paragraph{Network architecture.} Following FQL~\citep{park2025flow}, we use [512, 512, 512, 512] 4-layer MLPs with layer normalization~\citep{ln_ba2016} for critic and policy networks. For \texttt{cube-triple} and \texttt{cube-quadruple} tasks, which we use larger datasets, we increased the model size to [1024, 1024, 1024, 1024] for all neural networks. 

\paragraph{Value learning.} For both 1-step critic and chunked critic, we train two Q functions and take the mean of the two Q values for the Q value in the actor objective, calculating the target for the TD learning and the regularization.
to improve stability. 

\subsection{Baselines.}
Here, we briefly introduce the baselines used in this paper.

\textbf{FQL}~\citep{park2025flow} is a behavior regularization-based offline RL method that learns a constrained policy from an expressive flow-based behavior policy. Despite its simplicity, FQL serves as a strong baseline for various offline RL benchmarks (e.g., OGBench). In this paper, we also use FQL as a policy extraction method for the single-step policy.

\textbf{QC-FQL}~\citep{li2025reinforcement} extends FQL to action-chunked Q-learning by operating on action sequences instead of single actions. QC-FQL improves FQL's performance in manipulation tasks, especially in challenging tasks such as \texttt{cube-triple} or \texttt{cube-quadruple}. In this paper, we also use QC-FQL as a policy extraction method for the action-chunked policy.

\textbf{DEAS}~\citep{kim2025deas} extends action-chunked offline RL by incorporating detached value learning~\citep{kostrikov2022iql} which decouples critic training from the policy to prevent value overestimation in the expanded action space. DEAS further combines distributional RL objectives and dual discount factors to mitigate the high variance introduced by the $n$-step returns.

\textbf{N-FQL} is an $n$-step return variant of FQL that incorporates horizon reduction via multi-step returns, while retaining a single-step critic and policy. 

\textbf{DQC.} decouples the chunk sizes used by the critic and the policy by distilling an action-chunked critic into a critic with a smaller chunk size used by the policy. While DQC employs a binary cross-entropy loss for value learning under a goal-conditioned setting, we utilize mean-squared error loss instead, as rewards generally lie in the range $[-N, 0]$ in our experimental setup. We use the official implementation only with this necessary change.

\subsection{Gridworld Experimental Details}
\label{appendix:gridworld}

We provide the details for the gridworld experiment used in \Cref{fig:value-backup}. 
%This experiment investigates the trade-offs of single-step and chunk-based value backup and how CGQ mitigates those weaknesses. 

\paragraph{Environment and dataset generation.}
The environment consists of an $18 \times 18$ discrete grid. The agent's objective is to reach a fixed goal state located at coordinates $(6, 8)$.

\begin{itemize}
    \item \textbf{Dynamics:} The agent can move in four cardinal directions (up, down, left, right). Transitions that would lead out of the grid boundaries makes the agent stay in its current state.
    \item \textbf{Reward Function:} A sparse reward of $1$ is given only when the agent enters the goal state. For all other transitions, the reward is $0$. The episode is terminated when the agent reaches the goal. We use a discount factor of $\gamma = 0.9$.
    \item \textbf{Offline Dataset:} We collect a dataset $\mathcal{D}$ consisting of $60$ trajectories, each with a maximum length of $15$ steps. To simulate a suboptimal, exploratory data distribution, trajectories are generated using an $\epsilon$-greedy sampling ($\epsilon = 0.9$) from the optimal policy.
\end{itemize}

\textbf{Value Backup Definitions}
We compare three distinct backup operators to analyze the efficiency of value propagation over iteration $k$.

\textbf{1. Single-step Value Backup}
The standard 1-step backup updates the value function using individual transitions $(s, s')$ available in the dataset:
\begin{equation}
V_{k+1}(s) \leftarrow \max_{s' \in \mathcal{D}(s)} \left[ R(s, s') + \gamma V_k(s') \right] 
\end{equation}

\textbf{2. Chunk-based Value Backup ($h$-step)}
The chunk-based operator utilizes a sequence of states $(s_t, s_{t+1}, \dots, s_{t+h})$ of length $h$. The backup is defined as:
\begin{equation}
V_{k+1}(s_t) \leftarrow \max_{\text{chunk} \in \mathcal{D}} \left[ \mathcal{R} + \gamma^h V_k(s_{t+h}) \right] 
\end{equation}
where $\mathcal{R}$ is the discounted cumulative return $\sum_{k=0}^{h} \gamma^k r_{t+k}$ within the chunk. If the goal state is reached at step $i \leq H$, the return is truncated, and no bootstrapping from $V_k$ is performed. In our experiments, we set the chunk horizon $h = 4$.

\textbf{3. Chunk Guided Q-Learning (CGQ)}
The CGQ backup performs a convex combination of the 1-step and chunk-based updates, weighted by a hyperparameter $w = 0.7$:
\begin{equation}
V_{k+1} \leftarrow w \cdot V_{\text{single}} + (1-w) \cdot V_{\text{chunk}} 
\end{equation}

\textbf{Evaluation.}
Gridworld is a tabular environment, where we do not have a source of bias such as function approximation error, as in deep RL. To simulate the accumulation of errors in single-step TD learning, we added Gaussian noise with $\sigma = 0.05$ to each value update. We calculate the accuracy of value function with mean squared error $\mathbb{E}_{s \in D} [((V(s) - V^*(s) )^2]$.

\subsection{Q\&As}
We use \texttt{cube-double} environment for Q\&A and all ablation studies. For the critic design experiments, we use the same $\alpha$, $\alpha_{\text{step}}$, $\beta$, and $\tau$ in CGQ across all variants. For ablation studies on horizon length, $\beta$, and $\tau$, except for the target hyperparameter, all other hyperparameter values follow those of the \texttt{cube-double} setting reported in \Cref{tab:all-hyperparameters}.

\clearpage
\section{Hyperparameters}
\label{sec:hyperparameters}

\paragraph{Shared hyperparameters.} Here, we report the shared hyperparameters across all methods in \Cref{table:shared_hyp}.
\begin{table}[ht]
\caption{
\footnotesize
\textbf{Shared hyperparameters across all methods.}
}
\label{table:shared_hyp}
\begin{center}
\scalebox{0.8}
{
\begin{tabular}{ll}
    \toprule
    \textbf{Hyperparameter} & \textbf{Value} \\
    \midrule
     Learning rate& $0.0003$ \\
    Optimizer & Adam~\citep{kingma2014adam} \\
    Gradient steps & $1000000$ (default), 2500000 (cube-triple, cube-quadruple) \\
    Batch size & $256$ (default), 1024 (cube-triple, Cube-quadruple) \\

    \bottomrule
\end{tabular}
}
\end{center}
\end{table} 

\paragraph{Network architectures.}
For the network size, we consider $[512, 512, 512, 512]$- and $[1024, 1024, 1024, 1024]$-sized MLPs for all the baselines except DEAS, using a larger architecture for \texttt{cube-triple} and \texttt{cube-quadruple} tasks and the smaller one otherwise. 
For DEAS, we follow the original setup and use $[512, 512, 512, 512]$ for the policy, 
$[256, 256, 256, 256]$ for the critic, and 
$[128, 128, 128, 128]$ for the value function.

\paragraph{Hyperparameters of CGQ}

Our main hyperparameters are BC constraint for the chunked critic $\alpha$, one for the single step critic $ \alpha_{\text{step}}$ , distillation coefficient $\beta$ and distillation expectile $\tau$. We report those hyperparameters tuning range in \Cref{tab:cgq-hparam-range} and hyperparameters for all the tasks in \Cref{tab:all-hyperparameters}.

\begin{table}[h!]
\caption{\textbf{Hyperparameters of CGQ.}}
\label{table:cgq_hyp}
\begin{center}
\scalebox{0.8}
{
\begin{tabular}{ll}
    \toprule
    \textbf{Hyperparameter} & \textbf{Value} \\
    \midrule
    Learning rate & $0.0003$ \\
    Optimizer & Adam~\citep{kingma2014adam} \\
    Gradient steps & $1000000$ (default), $2500000$ (cube-triple, cube-quadruple) \\
    Batch size & $256$ (default), $1024$ (cube-triple, Cube-quadruple) \\
    Discount factor $\gamma$ & $0.99$ \\ 
    Critic Ensemble Size & $2$ \\ 
    Clipped Q-Learning & False \\ 
    Flow-steps & $10$ \\
    Flow time sampling distribution & $\mathrm{Unif}([0,1])$ \\ 
    Chunk size for $Q_{\phi_c}$ & $10$ \\
    BC coefficient $\alpha, \alpha_{step}$ & \Cref{tab:all-hyperparameters} \\
    Distillation coefficient $\beta$ & \Cref{tab:all-hyperparameters} \\
    Distillation expectile $\tau$ & $0.95$ (default) \\
    \bottomrule
\end{tabular}
}
\end{center}
\end{table}

\begin{table}[ht]
    \caption{\textbf{Hyperparameter tuning range for CGQ}.}
    \label{tab:cgq-hparam-range}
    \centering
    \scalebox{0.8}{
    \begin{tabular}{@{}cccccc@{}}
    \toprule
    \multirow{2}{*}{\textbf{Environment}} &
    \textbf{BC Constraint for $\pi_{\theta_c}$} &
    \textbf{BC Constraint for $\pi_{\theta}$} &
    \textbf{Distillation Coefficient} &
    \textbf{Distillation Expectile} &
    \textbf{Chunk size} \\
    & ($\alpha$) & ($\alpha_{\text{step}}$) & ($\beta$) & ($\tau$) & ($h$) \\
    \midrule
    \textbf{Manipulation} &
    $\{100, 300\}$ & $\{100, 300\}$ & $\{1, 0.1, 0.01\}$ & $\{0.5,0.8,0.95\}$ & $\{10, 20\}$\\
    \textbf{Navigation} &
    $\{30,100\}$ & $\{10, 30\}$ & $\{1, 0.5, 0.1,0.01\}$ & $\{0.5,0.6,0.7,0.8,0.95\}$ & $\{10\}$\\
    \bottomrule
    \end{tabular}}
    \vspace{1mm}        
\end{table}

\begin{table}[ht]
    \caption{\textbf{Hyperparameter tuning range for DEAS}.}
    \label{tab:deas-hparam-range}
    \centering
    \scalebox{0.8}{
    \begin{tabular}{@{}cccc@{}}
    \toprule
    \textbf{BC Constraint} &
    \textbf{Chunk size} &
    \textbf{Intra-chunk discount} &
    \textbf{HL-Gaussian support} \\
    ($\alpha$) & ($h $) & ($\gamma_1$) & ($s$) \\
    \midrule
    $\{0.03, 0.1, 0.3, 1, 3, 10\}$ &
    $\{4, 8\}$ &
    $\{0.8, 0.9, 0.99, 0.999\}$ &
    $\{\text{data-centric}, \text{universal}\}$ \\
    \bottomrule
    \end{tabular}}
    \vspace{1mm}
\end{table}

\begin{table}[ht]
    \caption{\textbf{Hyperparameter tuning range for DQC}.}
    \label{tab:dqc-hparam-range}
    \centering
    \scalebox{0.8}{
    \begin{tabular}{@{}ccccc@{}}
    \toprule
    \multirow{2}{*}{\textbf{Environment}} &
    \textbf{Backup Quantile} &
    \textbf{Distillation Expectile} &
    \textbf{Backup Horizon} &
    \textbf{Policy Chunk Size} \\
    & ($\kappa_b$) & ($\kappa_d$) & ($h$) or ($n$) & ($h_a$) \\
    \midrule
    \textbf{\texttt{cube-}$*$} &
    $\{0.5, 0.7, 0.9, 0.93, 0.95, 0.97, 0.99\}$ & $\{0.5, 0.8\}$ & $\{5, 10, 25\}$ & $\{1, 5, 25\}$ \\
    \textbf{Others} &
    $\{0.5, 0.7, 0.9\}$ & $\{0.5, 0.8\}$ & $\{5, 10 ,25\}$ & $\{1, 5, 25\}$ \\
    \bottomrule
    \end{tabular}}
    \vspace{1mm}        
\end{table}

\paragraph {BC Constraint ($\alpha$).}
For QC-FQL and NFQL, we adopt the default hyperparameter $\alpha$ from FQL for each domain, and tune all methods on the default task (\texttt{singletask-v0}) using three choices of $\alpha \in \{\alpha_{\mathrm{default}} / 3, \alpha_{\mathrm{default}}, 3\alpha_{\mathrm{default}} \}$, following \citep{park2025flow}.
For DEAS, we sweep $\alpha$ over $\{0.03, 0.1, 0.3, 1.0, 3.0, 10.0\}$ following the tuning range.  

\paragraph {Chunk Size $h$.}
Here, we clarify the chunk sizes used to obtain the results reported in the tables.
We follow the chunk size settings used in the original papers for each method.
For some baselines (e.g., NFQL and QC-FQL), results are drawn from multiple prior works, which use different chunk sizes across tasks.
Accordingly, we report the task-specific chunk size used for each method, including those adopted from prior work.
See \Cref{tab:chunk-size}.

\begin{table}[t!]
\caption{\textbf{Task-specific chunk size configurations.}
This table summarizes the chunk sizes used by each agent for each task, including both our experimental settings and those reported in prior work.
Gray entries indicate chunk sizes corresponding to results reported in prior work.}
\centering
\scalebox{0.9}{
\begin{tabular}{@{}lccccc@{}}
\toprule
\textbf{Environment} & \textbf{NFQL} & \textbf{QC-FQL} & \textbf{DEAS} & \textbf{DQC} & \textbf{CGQ} \\
\midrule
\texttt{antmaze-large}        & $5$ & $5$ & $4$ & $25$ & $10$ \\
\texttt{antmaze-giant}        & $5$ & $5$ & $4$ & $25$ & $10$ \\
\texttt{humanoidmaze-medium}  & $5$ & $5$ & $4$ & $10$ & $10$ \\
\texttt{puzzle-3x3-sparse}    & $\textcolor{gray}{5}$ & $\textcolor{gray}{5}$ & $4$ & $10$ & $20$ \\
\texttt{scene-sparse}         & $\textcolor{gray}{5}$ & $\textcolor{gray}{5}$ & $4$ & $5$ & $20$ \\
\texttt{cube-double}          & $\textcolor{gray}{5}$ & $\textcolor{gray}{5}$ & $\textcolor{gray}{4}$ & $5$ & $20$ \\
\texttt{cube-triple}          & $\textcolor{gray}{4}$ & $\textcolor{gray}{4}$ & $\textcolor{gray}{4}$ & $25$ & $20$ \\
\texttt{puzzle-4x4}           & $5$ & $5$ & $8$ & $5$ & $10$ \\
\texttt{cube-quadruple}       & $\textcolor{gray}{4}$ & $\textcolor{gray}{4}$ & $\textcolor{gray}{4}$ & $10$ & $20$ \\
\bottomrule
\end{tabular}}
\vspace{1mm}
\label{tab:chunk-size}
\end{table}
 
\paragraph{Hyperparameter search for QC-FQL.}
Following the tuning process of the original paper~\cite{li2025reinforcement}, we set the optimal $\alpha$ of FQL~\cite{park2025flow} as $\alpha_{\text{default}}$ and sweep over $ \{ \alpha_{\text{default}} / 3, \alpha_{\text{default}}, 3\alpha_{\text{default}}\}$. For the puzzle-4x4 and navigation tasks, we use the chunk size $5$. See \Cref{tab:all-hyperparameters} for the value of $\alpha$.

\paragraph{Hyperparameter search for DQC.}
We perform an exhaustive hyperparameter sweep for DQC, evaluating 98 configurations for \texttt{cube-}$*$ environments and 42 configurations for the others. In particular, we tune the backup quantile $\kappa_b$, distillation expectile $\kappa_d$, backup horizon $h$ and policy chunk size $h_a$. We use clipped Q-learning and use 32 as the number of Best-of-$N$ samples following the original paper. See \Cref{tab:dqc-hparam-range} for the full hyperparameter sweep range.

\paragraph{Hyperparameter search for DEAS.} We perform an extensive hyperparameter sweep for DEAS, evaluating 96 configurations per environment. In particular, we tune the BC constraint $\alpha$, chunk size $h$, and intra-chunk discount $\gamma_1$. For selecting $v_{\text{min}}$ and $v_{\text{max}}$ in the distributional RL component, we evaluate both approaches proposed in the original paper, \emph{data-centric} and \emph{universal}, and select the better-performing protocol. We denote this selection protocol as $s$ in the table, where $d$ and $u$ correspond to data-centric and universal, respectively. We additionally apply Q-loss normalization and clipped Q-learning following the original setting. See \Cref{tab:deas-hparam-range} for the full hyperparameter sweep range.

\paragraph{Hyperparameter search for NFQL.}  We use $h$ to 5 following \citet{li2025reinforcement}. Otherwise, we use the same strategy as in QC-FQL.

\begin{table}[t!]
    \caption{\textbf{Task-specific hyperparameters for all methods.}}
    \label{tab:all-hyperparameters}
    \centering
    \scalebox{0.8}{
    \begin{tabular}{@{}lccccc@{}}
    \toprule
    \multirow{2}{*}{\textbf{Environment}} & \textbf{CGQ} & \textbf{DQC} & \textbf{DEAS} & \textbf{NFQL} & \textbf{QC-FQL} \\
    & ($\alpha$, $\alpha_{step}$, $\beta$, $\tau$, $h$) & ($h$, $h_a$, $\kappa_b$, $\kappa_d$) & ($\alpha$, $h$, $\gamma_1$, $\gamma_2$, $s$) & ($\alpha$, $h$) & ($\alpha$, $h$) \\
    \midrule
    \texttt{antmaze-large} & $(100, 10, 0.01, 0.7, 10)$ & $(25, 1, 0.5, 0.8)$ & $(1, 4, 0.99, 0.999, d)$ & $(30, 5)$ & $(30, 5)$ \\
    \texttt{antmaze-giant} & $(300, 10, 0.01, 0.8, 10)$ & $(25, 1, 0.9, 0.5)$ & $(0.3, 4, 0.99, 0.999, u)$  & $(30, 5)$ & $(10, 5)$ \\
    \texttt{humanoidmaze-medium} & $(100, 10, 0.5, 0.8, 10)$ & $(10, 1, 0.9, 0.5)$ & $(1, 4, 0.8, 0.999, d)$ & $(30, 5)$ & $(30, 5)$ \\
    \texttt{puzzle-3x3-sparse} & $(100, 100, 0.01, 0.8, 20)$  & $(10, 5, 0.7, 0.8)$ & $(3, 8, 0.9, 0.99)$ & - & - \\
    \texttt{scene-sparse} & $(300, 100, 0.1, 0.8, 20)$ & $(5, 1, 0.9, 0.5)$ & $(3, 4, 0.99, 0.999, u)$ & - & - \\
    \texttt{cube-double} & $(100, 100, 1, 0.8, 20)$ & $(5, 1, 0.99, 0.8)$ & - & - & - \\
    \texttt{cube-triple} & $(300, 300, 0.01, 0.8, 20)$ & $(25, 5, 0.7, 0.8)$ & - & - & - \\
    \texttt{puzzle-4x4} & $(300, 300, 0.1, 0.5, 10)$ & $(5, 1, 0.7, 0.5)$ & $(1, 4, 0.999, 0.999, u)$ & $(1000, 5)$ & $(3000, 5)$ \\
    \texttt{cube-quadruple} & $(300, 300, 0.01, 0.95, 20)$ & $(10, 5, 0.9, 0.8)$ & - & - & - \\
    \bottomrule
    \end{tabular}}
    \vspace{1mm}
\end{table}

\clearpage
\section{Complete Numerical Results}
For completeness, we provide the complete per-task results for OGBench experiments in \Cref{tab:complete} (corresponding to \Cref{table:main}). The results are averaged over $4$ seeds and we report the standard deviations for each tasks. We highlight the numbers that are above or equal to $95\%$ of the best performance.
\vspace{-0.4em}

\begin{table*}[h!]
    \centering
    \caption{\textbf{Complete results for OGBench experiments.} We present the full results on 45 tasks. The results are averaged over $4$ seeds. } 
    \label{tab:complete}
\begin{tabularew}{ll*{6}{>{\spew{.5}{+1}}r@{\,}l}}
\toprule
\texttt{Task Type} & \texttt{Task Category}
& \multicolumn{2}{c}{\texttt{FQL}} & \multicolumn{2}{c}{\texttt{NFQL}} & \multicolumn{2}{c}{\texttt{QC-FQL}} & \multicolumn{2}{c}{\texttt{DEAS}} & \multicolumn{2}{c}{\texttt{DQC}} & \multicolumn{2}{c}{\texttt{\color{myblue}CGQ}} \\
\midrule

\multirow{30}{*}{\centering\texttt{Manipulation}}
& \texttt{scene-task1} & \tt{69} & \phantom{\tiny$\pm$\tt{0}} & \tt{22} & \phantom{\tiny$\pm$\tt{0}} & \tt{\color{myblue}99} & \phantom{\tiny$\pm$\tt{0}} & \tt{\color{myblue}99} & {\tiny$\pm$\tt{0}} & \tt{94} & {\tiny$\pm$\tt{4}} & \tt{90} & {\tiny$\pm$\tt{2}}\\
& \texttt{scene-task2} & \tt{51} & \phantom{\tiny$\pm$\tt{0}} & \tt{4} & \phantom{\tiny$\pm$\tt{0}} & \tt{88} & \phantom{\tiny$\pm$\tt{0}} & \tt{\color{myblue}98} & {\tiny$\pm$\tt{3}} & \tt{93} & {\tiny$\pm$\tt{3}} & \tt{79} & {\tiny$\pm$\tt{3}}\\
& \texttt{scene-task3} & \tt{68} & \phantom{\tiny$\pm$\tt{0}} & \tt{1} & \phantom{\tiny$\pm$\tt{0}} & \tt{\color{myblue}98} & \phantom{\tiny$\pm$\tt{0}} & \tt{84} & {\tiny$\pm$\tt{3}} & \tt{62} & {\tiny$\pm$\tt{7}} & \tt{85} & {\tiny$\pm$\tt{4}}\\
& \texttt{scene-task4} & \tt{75} & \phantom{\tiny$\pm$\tt{0}} & \tt{50} & \phantom{\tiny$\pm$\tt{0}} & \tt{92} & \phantom{\tiny$\pm$\tt{0}} & \tt{\color{myblue}99} & {\tiny$\pm$\tt{1}} & \tt{84} & {\tiny$\pm$\tt{4}} & \tt{80} & {\tiny$\pm$\tt{6}}\\
& \texttt{scene-task5} & \tt{25} & \phantom{\tiny$\pm$\tt{0}} & \tt{14} & \phantom{\tiny$\pm$\tt{0}} & \tt{41} & \phantom{\tiny$\pm$\tt{0}} & \tt{\color{myblue}87} & {\tiny$\pm$\tt{5}} & \tt{73} & {\tiny$\pm$\tt{7}} & \tt{25} & {\tiny$\pm$\tt{5}}\\
 & \texttt{cube-double-task1} & \tt{63} & \phantom{\tiny$\pm$\tt{0}} & \tt{33} & \phantom{\tiny$\pm$\tt{0}} & \tt{66} & \phantom{\tiny$\pm$\tt{0}} & \tt{76} & \phantom{\tiny$\pm$\tt{0}} & \tt{51} & {\tiny$\pm$\tt{7}} & \tt{\color{myblue}81} & {\tiny$\pm$\tt{9}}\\
 & \texttt{cube-double-task2} & \tt{36} & \phantom{\tiny$\pm$\tt{0}} & \tt{8} & \phantom{\tiny$\pm$\tt{0}} & \tt{43} & \phantom{\tiny$\pm$\tt{0}} & \tt{51} & \phantom{\tiny$\pm$\tt{0}} & \tt{28} & {\tiny$\pm$\tt{6}} & \tt{\color{myblue}77} & {\tiny$\pm$\tt{3}}\\
 & \texttt{cube-double-task3} & \tt{22} & \phantom{\tiny$\pm$\tt{0}} & \tt{1} & \phantom{\tiny$\pm$\tt{0}} & \tt{38} & \phantom{\tiny$\pm$\tt{0}} & \tt{47} & \phantom{\tiny$\pm$\tt{0}} & \tt{22} & {\tiny$\pm$\tt{3}} & \tt{\color{myblue}78} & {\tiny$\pm$\tt{4}}\\
 & \texttt{cube-double-task4} & \tt{9} & \phantom{\tiny$\pm$\tt{0}} & \tt{0} & \phantom{\tiny$\pm$\tt{0}} & \tt{\color{myblue}11} & \phantom{\tiny$\pm$\tt{0}} & \tt{8} & \phantom{\tiny$\pm$\tt{0}} & \tt{4} & {\tiny$\pm$\tt{1}} & \tt{9} & {\tiny$\pm$\tt{1}}\\
 & \texttt{cube-double-task5} & \tt{12} & \phantom{\tiny$\pm$\tt{0}} & \tt{13} & \phantom{\tiny$\pm$\tt{0}} & \tt{37} & \phantom{\tiny$\pm$\tt{0}} & \tt{57} & \phantom{\tiny$\pm$\tt{0}} & \tt{48} & {\tiny$\pm$\tt{5}} & \tt{\color{myblue}74} & {\tiny$\pm$\tt{7}}\\
 & \texttt{puzzle-3x3-task1} & \tt{\color{myblue}99} & \phantom{\tiny$\pm$\tt{0}} & \tt{\color{myblue}100} & \phantom{\tiny$\pm$\tt{0}} & \tt{\color{myblue}100} & \phantom{\tiny$\pm$\tt{0}} & \tt{\color{myblue}100} & {\tiny$\pm$\tt{0}} & \tt{\color{myblue}100} & {\tiny$\pm$\tt{0}} & \tt{\color{myblue}99} & {\tiny$\pm$\tt{1}}\\
 & \texttt{puzzle-3x3-task2} & \tt{\color{myblue}99} & \phantom{\tiny$\pm$\tt{0}} & \tt{\color{myblue}100} & \phantom{\tiny$\pm$\tt{0}} & \tt{86} & \phantom{\tiny$\pm$\tt{0}} & \tt{\color{myblue}100} & {\tiny$\pm$\tt{0}} & \tt{\color{myblue}100} & {\tiny$\pm$\tt{0}} & \tt{\color{myblue}100} & {\tiny$\pm$\tt{0}}\\
 & \texttt{puzzle-3x3-task3} & \tt{\color{myblue}100} & \phantom{\tiny$\pm$\tt{0}} & \tt{\color{myblue}100} & \phantom{\tiny$\pm$\tt{0}} & \tt{50} & \phantom{\tiny$\pm$\tt{0}} & \tt{\color{myblue}100} & {\tiny$\pm$\tt{1}} & \tt{\color{myblue}100} & {\tiny$\pm$\tt{0}} & \tt{\color{myblue}97} & {\tiny$\pm$\tt{2}}\\
 & \texttt{puzzle-3x3-task4} & \tt{\color{myblue}100} & \phantom{\tiny$\pm$\tt{0}} & \tt{\color{myblue}100} & \phantom{\tiny$\pm$\tt{0}} & \tt{75} & \phantom{\tiny$\pm$\tt{0}} & \tt{\color{myblue}97} & {\tiny$\pm$\tt{5}} & \tt{\color{myblue}100} & {\tiny$\pm$\tt{0}} & \tt{\color{myblue}98} & {\tiny$\pm$\tt{1}}\\
 & \texttt{puzzle-3x3-task5} & \tt{\color{myblue}100} & \phantom{\tiny$\pm$\tt{0}} & \tt{91} & \phantom{\tiny$\pm$\tt{0}} & \tt{4} & \phantom{\tiny$\pm$\tt{0}} & \tt{\color{myblue}100} & {\tiny$\pm$\tt{0}} & \tt{\color{myblue}100} & {\tiny$\pm$\tt{0}} & \tt{\color{myblue}98} & {\tiny$\pm$\tt{2}}\\
 & \texttt{puzzle-4x4-task1} & \tt{34} & \phantom{\tiny$\pm$\tt{0}} & \tt{35} & {\tiny$\pm$\tt{5}} & \tt{48} & {\tiny$\pm$\tt{3}} & \tt{\color{myblue}87} & {\tiny$\pm$\tt{6}} & \tt{13} & {\tiny$\pm$\tt{6}} & \tt{65} & {\tiny$\pm$\tt{3}}\\
 & \texttt{puzzle-4x4-task2} & \tt{16} & \phantom{\tiny$\pm$\tt{0}} & \tt{\color{myblue}21} & {\tiny$\pm$\tt{6}} & \tt{18} & {\tiny$\pm$\tt{2}} & \tt{9} & {\tiny$\pm$\tt{10}} & \tt{11} & {\tiny$\pm$\tt{4}} & \tt{5} & {\tiny$\pm$\tt{1}}\\
 & \texttt{puzzle-4x4-task3} & \tt{18} & \phantom{\tiny$\pm$\tt{0}} & \tt{26} & {\tiny$\pm$\tt{3}} & \tt{39} & {\tiny$\pm$\tt{4}} & \tt{\color{myblue}60} & {\tiny$\pm$\tt{7}} & \tt{6} & {\tiny$\pm$\tt{1}} & \tt{51} & {\tiny$\pm$\tt{11}}\\
 & \texttt{puzzle-4x4-task4} & \tt{11} & \phantom{\tiny$\pm$\tt{0}} & \tt{20} & {\tiny$\pm$\tt{4}} & \tt{18} & {\tiny$\pm$\tt{1}} & \tt{\color{myblue}30} & {\tiny$\pm$\tt{6}} & \tt{4} & {\tiny$\pm$\tt{3}} & \tt{18} & {\tiny$\pm$\tt{5}}\\
 & \texttt{puzzle-4x4-task5} & \tt{7} & \phantom{\tiny$\pm$\tt{0}} & \tt{13} & {\tiny$\pm$\tt{1}} & \tt{\color{myblue}16} & {\tiny$\pm$\tt{3}} & \tt{10} & {\tiny$\pm$\tt{2}} & \tt{3} & {\tiny$\pm$\tt{1}} & \tt{2} & {\tiny$\pm$\tt{0}}\\
 & \texttt{cube-triple-task1} & \tt{31} & \phantom{\tiny$\pm$\tt{0}} & \tt{17} & \phantom{\tiny$\pm$\tt{0}} & \tt{\color{myblue}100} & \phantom{\tiny$\pm$\tt{0}} & \tt{\color{myblue}98} & \phantom{\tiny$\pm$\tt{0}} & \tt{8} & {\tiny$\pm$\tt{3}} & \tt{\color{myblue}100} & {\tiny$\pm$\tt{0}}\\
 & \texttt{cube-triple-task2} & \tt{9} & \phantom{\tiny$\pm$\tt{0}} & \tt{91} & \phantom{\tiny$\pm$\tt{0}} & \tt{92} & \phantom{\tiny$\pm$\tt{0}} & \tt{\color{myblue}95} & \phantom{\tiny$\pm$\tt{0}} & \tt{0} & {\tiny$\pm$\tt{0}} & \tt{\color{myblue}100} & {\tiny$\pm$\tt{0}}\\
 & \texttt{cube-triple-task3} & \tt{12} & \phantom{\tiny$\pm$\tt{0}} & \tt{0} & \phantom{\tiny$\pm$\tt{0}} & \tt{\color{myblue}92} & \phantom{\tiny$\pm$\tt{0}} & \tt{88} & \phantom{\tiny$\pm$\tt{0}} & \tt{0} & {\tiny$\pm$\tt{0}} & \tt{\color{myblue}96} & {\tiny$\pm$\tt{0}}\\
 & \texttt{cube-triple-task4} & \tt{0} & \phantom{\tiny$\pm$\tt{0}} & \tt{0} & \phantom{\tiny$\pm$\tt{0}} & \tt{59} & \phantom{\tiny$\pm$\tt{0}} & \tt{45} & \phantom{\tiny$\pm$\tt{0}} & \tt{0} & {\tiny$\pm$\tt{0}} & \tt{\color{myblue}80} & {\tiny$\pm$\tt{6}}\\
 & \texttt{cube-triple-task5} & \tt{2} & \phantom{\tiny$\pm$\tt{0}} & \tt{0} & \phantom{\tiny$\pm$\tt{0}} & \tt{74} & \phantom{\tiny$\pm$\tt{0}} & \tt{\color{myblue}87} & \phantom{\tiny$\pm$\tt{0}} & \tt{0} & {\tiny$\pm$\tt{0}} & \tt{\color{myblue}85} & {\tiny$\pm$\tt{9}}\\
 & \texttt{cube-quadruple-task1} & \tt{79} & \phantom{\tiny$\pm$\tt{0}} & \tt{70} & \phantom{\tiny$\pm$\tt{0}} & \tt{79} & \phantom{\tiny$\pm$\tt{0}} & \tt{92} & \phantom{\tiny$\pm$\tt{0}} & \tt{90} & {\tiny$\pm$\tt{14}} & \tt{\color{myblue}100} & {\tiny$\pm$\tt{0}}\\
 & \texttt{cube-quadruple-task2} & \tt{0} & \phantom{\tiny$\pm$\tt{0}} & \tt{\color{myblue}97} & \phantom{\tiny$\pm$\tt{0}} & \tt{63} & \phantom{\tiny$\pm$\tt{0}} & \tt{\color{myblue}100} & \phantom{\tiny$\pm$\tt{0}} & \tt{62} & {\tiny$\pm$\tt{29}} & \tt{\color{myblue}100} & {\tiny$\pm$\tt{0}}\\
 & \texttt{cube-quadruple-task3} & \tt{6} & \phantom{\tiny$\pm$\tt{0}} & \tt{1} & \phantom{\tiny$\pm$\tt{0}} & \tt{33} & \phantom{\tiny$\pm$\tt{0}} & \tt{62} & \phantom{\tiny$\pm$\tt{0}} & \tt{53} & {\tiny$\pm$\tt{31}} & \tt{\color{myblue}89} & {\tiny$\pm$\tt{9}}\\
 & \texttt{cube-quadruple-task4} & \tt{0} & \phantom{\tiny$\pm$\tt{0}} & \tt{13} & \phantom{\tiny$\pm$\tt{0}} & \tt{38} & \phantom{\tiny$\pm$\tt{0}} & \tt{31} & \phantom{\tiny$\pm$\tt{0}} & \tt{34} & {\tiny$\pm$\tt{17}} & \tt{\color{myblue}72} & {\tiny$\pm$\tt{3}}\\
 & \texttt{cube-quadruple-task5} & \tt{0} & \phantom{\tiny$\pm$\tt{0}} & \tt{0} & \phantom{\tiny$\pm$\tt{0}} & \tt{12} & \phantom{\tiny$\pm$\tt{0}} & \tt{35} & \phantom{\tiny$\pm$\tt{0}} & \tt{\color{myblue}40} & {\tiny$\pm$\tt{32}} & \tt{24} & {\tiny$\pm$\tt{12}}\\
\midrule
\multirow{15}{*}{\texttt{Navigation}}
 & \texttt{antmaze-large-task1} & \tt{80} & \phantom{\tiny$\pm$\tt{0}} & \tt{38} & {\tiny$\pm$\tt{15}} & \tt{20} & {\tiny$\pm$\tt{9}} & \tt{74} & {\tiny$\pm$\tt{3}} & \tt{72} & {\tiny$\pm$\tt{6}} & \tt{\color{myblue}88} & {\tiny$\pm$\tt{2}}\\
 & \texttt{antmaze-large-task2} & \tt{57} & \phantom{\tiny$\pm$\tt{0}} & \tt{39} & {\tiny$\pm$\tt{2}} & \tt{0} & {\tiny$\pm$\tt{0}} & \tt{46} & {\tiny$\pm$\tt{9}} & \tt{61} & {\tiny$\pm$\tt{7}} & \tt{\color{myblue}74} & {\tiny$\pm$\tt{6}}\\
 & \texttt{antmaze-large-task3} & \tt{\color{myblue}93} & \phantom{\tiny$\pm$\tt{0}} & \tt{81} & {\tiny$\pm$\tt{5}} & \tt{49} & {\tiny$\pm$\tt{8}} & \tt{80} & {\tiny$\pm$\tt{4}} & \tt{74} & {\tiny$\pm$\tt{4}} & \tt{\color{myblue}92} & {\tiny$\pm$\tt{3}}\\
 & \texttt{antmaze-large-task4} & \tt{\color{myblue}80} & \phantom{\tiny$\pm$\tt{0}} & \tt{38} & {\tiny$\pm$\tt{11}} & \tt{6} & {\tiny$\pm$\tt{6}} & \tt{54} & {\tiny$\pm$\tt{9}} & \tt{73} & {\tiny$\pm$\tt{16}} & \tt{26} & {\tiny$\pm$\tt{20}}\\
 & \texttt{antmaze-large-task5} & \tt{\color{myblue}83} & \phantom{\tiny$\pm$\tt{0}} & \tt{38} & {\tiny$\pm$\tt{22}} & \tt{23} & {\tiny$\pm$\tt{15}} & \tt{\color{myblue}82} & {\tiny$\pm$\tt{3}} & \tt{74} & {\tiny$\pm$\tt{6}} & \tt{55} & {\tiny$\pm$\tt{33}}\\
 & \texttt{antmaze-giant-task1} & \tt{\color{myblue}4} & \phantom{\tiny$\pm$\tt{0}} & \tt{0} & {\tiny$\pm$\tt{0}} & \tt{0} & {\tiny$\pm$\tt{0}} & \tt{0} & {\tiny$\pm$\tt{0}} & \tt{0} & {\tiny$\pm$\tt{1}} & \tt{0} & {\tiny$\pm$\tt{0}}\\
 & \texttt{antmaze-giant-task2} & \tt{9} & \phantom{\tiny$\pm$\tt{0}} & \tt{1} & {\tiny$\pm$\tt{1}} & \tt{0} & {\tiny$\pm$\tt{0}} & \tt{28} & {\tiny$\pm$\tt{3}} & \tt{\color{myblue}41} & {\tiny$\pm$\tt{4}} & \tt{0} & {\tiny$\pm$\tt{1}}\\
 & \texttt{antmaze-giant-task3} & \tt{0} & \phantom{\tiny$\pm$\tt{0}} & \tt{0} & {\tiny$\pm$\tt{0}} & \tt{0} & {\tiny$\pm$\tt{0}} & \tt{\color{myblue}1} & {\tiny$\pm$\tt{1}} & \tt{1} & {\tiny$\pm$\tt{1}} & \tt{0} & {\tiny$\pm$\tt{0}}\\
 & \texttt{antmaze-giant-task4} & \tt{\color{myblue}14} & \phantom{\tiny$\pm$\tt{0}} & \tt{0} & {\tiny$\pm$\tt{0}} & \tt{0} & {\tiny$\pm$\tt{0}} & \tt{8} & {\tiny$\pm$\tt{4}} & \tt{5} & {\tiny$\pm$\tt{2}} & \tt{0} & {\tiny$\pm$\tt{0}}\\
 & \texttt{antmaze-giant-task5} & \tt{16} & \phantom{\tiny$\pm$\tt{0}} & \tt{8} & {\tiny$\pm$\tt{3}} & \tt{0} & {\tiny$\pm$\tt{0}} & \tt{2} & {\tiny$\pm$\tt{3}} & \tt{2} & {\tiny$\pm$\tt{1}} & \tt{\color{myblue}17} & {\tiny$\pm$\tt{8}}\\
 & \texttt{humanoidmaze-medium-task1} & \tt{19} & \phantom{\tiny$\pm$\tt{0}} & \tt{3} & {\tiny$\pm$\tt{3}} & \tt{0} & {\tiny$\pm$\tt{0}} & \tt{22} & {\tiny$\pm$\tt{4}} & \tt{\color{myblue}87} & {\tiny$\pm$\tt{4}} & \tt{0} & {\tiny$\pm$\tt{0}}\\
 & \texttt{humanoidmaze-medium-task2} & \tt{\color{myblue}94} & \phantom{\tiny$\pm$\tt{0}} & \tt{26} & {\tiny$\pm$\tt{11}} & \tt{1} & {\tiny$\pm$\tt{1}} & \tt{41} & {\tiny$\pm$\tt{6}} & \tt{\color{myblue}94} & {\tiny$\pm$\tt{3}} & \tt{80} & {\tiny$\pm$\tt{3}}\\
 & \texttt{humanoidmaze-medium-task3} & \tt{74} & \phantom{\tiny$\pm$\tt{0}} & \tt{49} & {\tiny$\pm$\tt{8}} & \tt{0} & {\tiny$\pm$\tt{0}} & \tt{59} & {\tiny$\pm$\tt{7}} & \tt{\color{myblue}94} & {\tiny$\pm$\tt{2}} & \tt{26} & {\tiny$\pm$\tt{48}}\\
 & \texttt{humanoidmaze-medium-task4} & \tt{3} & \phantom{\tiny$\pm$\tt{0}} & \tt{4} & {\tiny$\pm$\tt{1}} & \tt{0} & {\tiny$\pm$\tt{0}} & \tt{7} & {\tiny$\pm$\tt{2}} & \tt{\color{myblue}90} & {\tiny$\pm$\tt{3}} & \tt{0} & {\tiny$\pm$\tt{1}}\\
 & \texttt{humanoidmaze-medium-task5} & \tt{\color{myblue}97} & \phantom{\tiny$\pm$\tt{0}} & \tt{32} & {\tiny$\pm$\tt{3}} & \tt{17} & {\tiny$\pm$\tt{5}} & \tt{56} & {\tiny$\pm$\tt{6}} & \tt{\color{myblue}98} & {\tiny$\pm$\tt{0}} & \tt{62} & {\tiny$\pm$\tt{26}}\\

\bottomrule
\end{tabularew}
\end{table*}

\clearpage
\section{Training curves}

We provide the training curve of CGQ for experiments in OGBench manipulation and locomotion environments in \Cref{fig:training_curves_ogbench_manip} and \Cref{fig:training_curves_ogbench_loco} (corresponding to \Cref{table:main}). We plot the mean and the standard deviation (across 4 seeds) by  covering [mean - std, mean + std] area with a lighter color.

\begin{figure}[h!]
    \centering
    \includegraphics[width=1.0\textwidth]{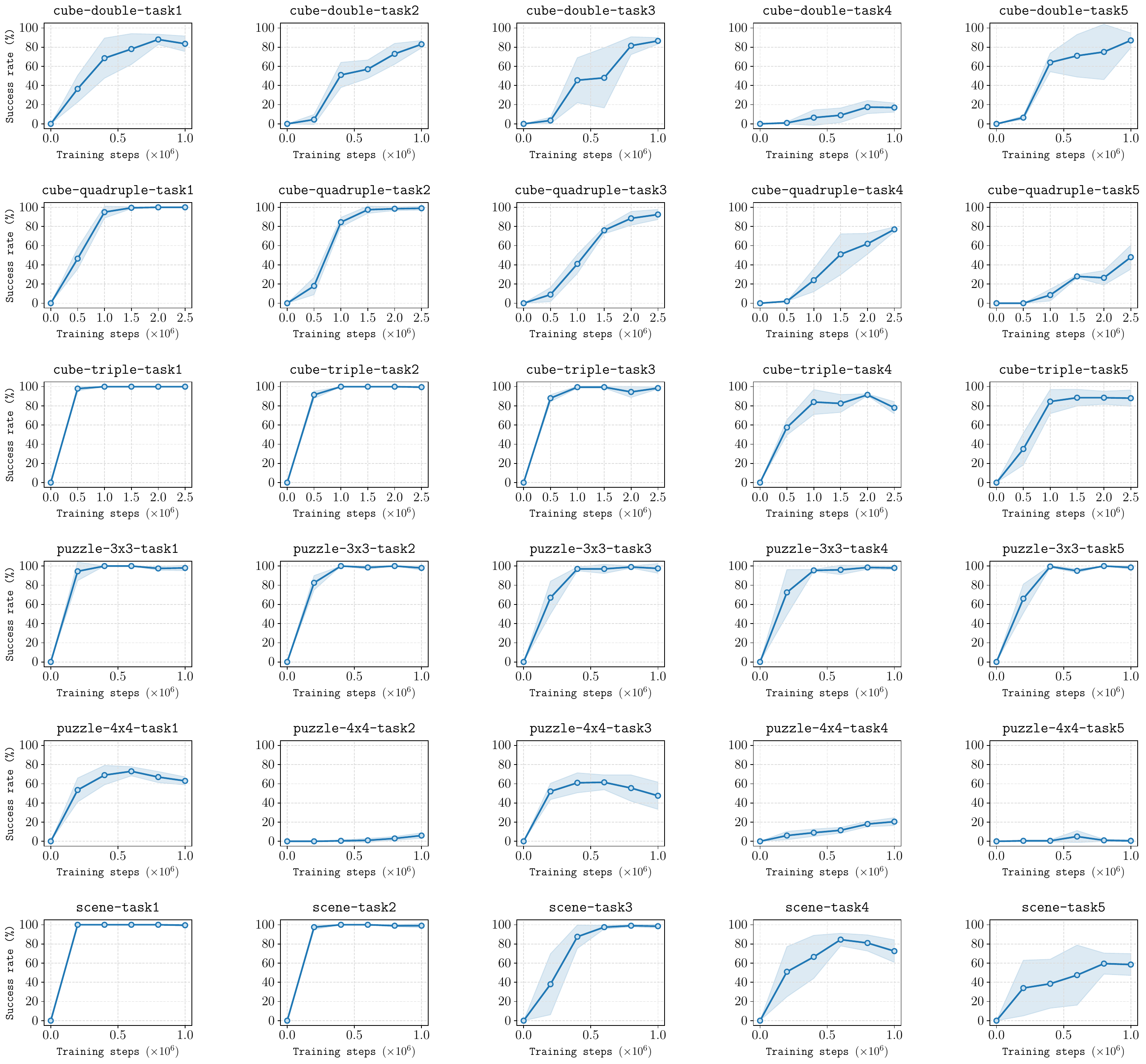}
    \caption{
    \footnotesize
    \textbf{Training curve of CGQ in OGBench manipulation environments.} We report the success rate for 50 evaluation episodes across 4 seeds (total 200 episodes). Shaded region represents the [mean - std, mean + std].}
    \label{fig:training_curves_ogbench_manip}
\end{figure}

\begin{figure}[h!]
    \centering
    \includegraphics[width=1.0\textwidth]{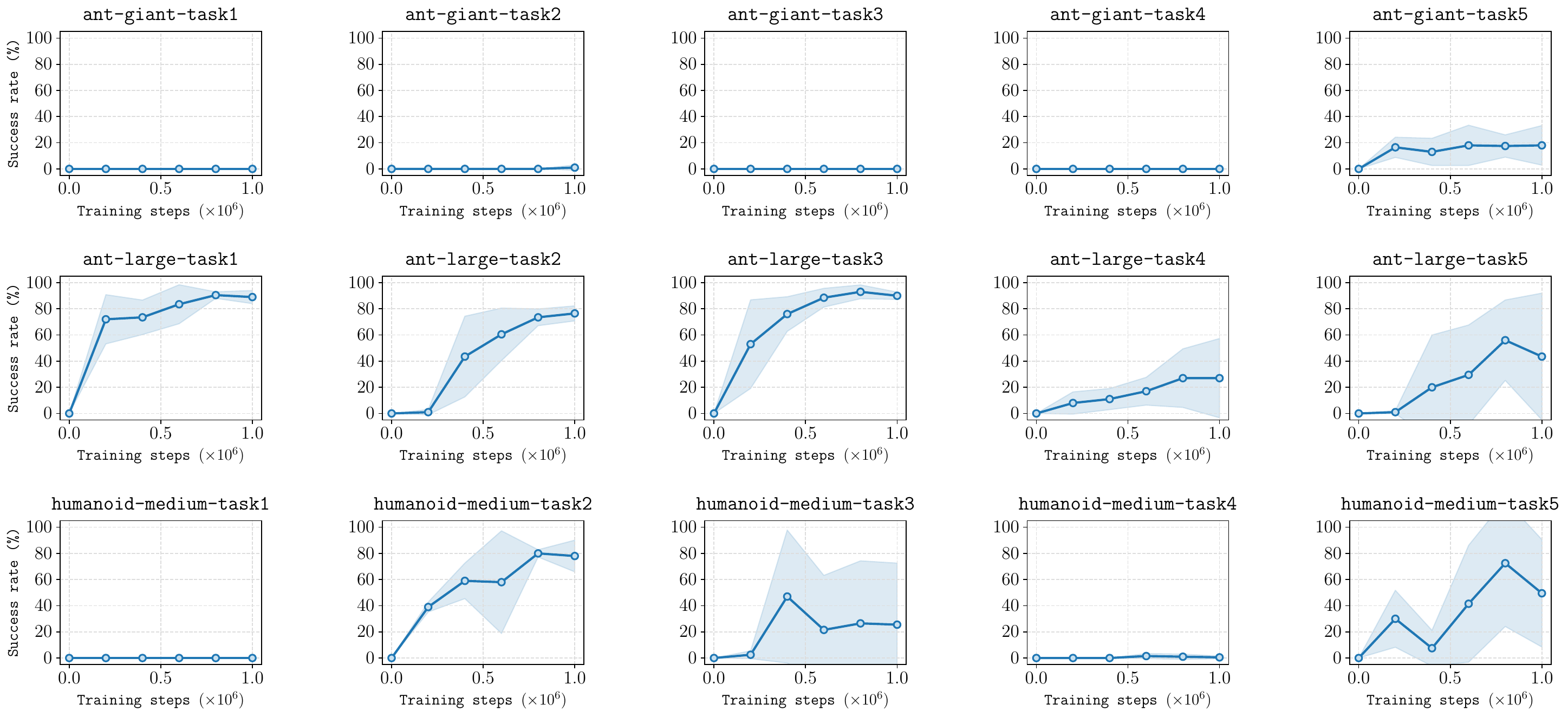}
    \caption{
    \footnotesize
    \textbf{Training curve of CGQ in OGBench locomotion environments.} We report the success rate for 50 evaluation episodes across 4 seeds (total 200 episodes). Shaded region represents the [mean - std, mean + std].}
    \label{fig:training_curves_ogbench_loco}
\end{figure}

\clearpage
\section{Proofs}
\label{sec:proof}
Here, we prove the main theorems of the paper. Throughout the proof, we consider the Hilbert space $\mathcal{Q} = L_2(\mathcal{S} \times \mathcal{A}, \|\cdot \|)$ of state-action value functions, equipped with the weighted norm $\|Q\| = (\mathbb{E}_{(s,a) \sim \mu} [Q(s, a)^2])^{1/2}$, where $\mu$ denotes the state-action distribution of an offline dataset $\mathcal{D}$. 

\subsection{Proof of \Cref{theorem:1bound}}

\begin{theorem}[\textbf{Bias accumulation of one-step TD learning}]
Let the operator $\T$ be $\gamma$-Lipschitz operator, i.e., $\|\T Q_1-\T Q_2\| \leq \gamma \|Q_1-Q_2\|$, with $\gamma < 1$. Consider a stochastic iterative process $Q_{k+1} = \widehat{\T}Q_k$, where
\begin{equation}
\widehat{\T}Q_{k} = \T Q_k + \varepsilon_k,
\end{equation}
which $\varepsilon_k$ is a sequence of independent, zero-mean noise vectors with $\mathbb{E}\left[\|\varepsilon_k\|^2\right] \leq \sigma^2$.
Let $Q^*$ be the unique fixed point of $\mathcal{T}$ (i.e., $Q^* = \mathcal{T}Q^*$). Then, the asymptotic expected squared error satisfies:
\begin{equation}
\limsup_{k \rightarrow  \infty} \mathbb{E}\left[ \|Q_k - Q^*\|^2 \right] \leq \frac{\sigma^2}{1-\gamma^2}
\end{equation}
\label{theorem:1bound_appendix}
\end{theorem}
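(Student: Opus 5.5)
We track the error $e_k := Q_k - Q^*$ and derive a one-step recursion for $\mathbb{E}\|e_k\|^2$. Since $Q^* = \T Q^*$, we can write
\begin{equation*}
e_{k+1} = \bigl(\T Q_k - \T Q^*\bigr) + \varepsilon_k .
\end{equation*}
Expanding the squared norm in the Hilbert space $\mathcal{Q}$ gives
\begin{equation*}
\|e_{k+1}\|^2 = \|\T Q_k - \T Q^*\|^2 + 2\langle \T Q_k - \T Q^*, \varepsilon_k\rangle + \|\varepsilon_k\|^2 .
\end{equation*}

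The key step is to show that the cross term vanishes in expectation. The quantity $Q_k$ is a deterministic function of $Q_0$ and $\varepsilon_0,\dots,\varepsilon_{k-1}$. By independence, $\varepsilon_k$ is independent of $Q_k$, and hence of $\T Q_k - \T Q^*$. Conditioning on $Q_k$ and using $\mathbb{E}[\varepsilon_k]=0$, the inner product has zero conditional mean. This requires exchanging expectation with the inner product (Bochner integrability), which holds because $\mathbb{E}\|\varepsilon_k\|^2<\infty$ and, inductively, $\mathbb{E}\|e_k\|^2<\infty$. This is the only delicate point. Here linearity of $\T$ is not needed; only measurability and independence matter.

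Applying the $\gamma$-Lipschitz property to the first term and the noise bound to the last yields
\begin{equation*}
a_{k+1} \le \gamma^2 a_k + \sigma^2, \qquad a_k := \mathbb{E}\|e_k\|^2 .
\end{equation*}
Unrolling this recursion gives
\begin{equation*}
a_k \le \gamma^{2k} a_0 + \sigma^2 \sum_{j=0}^{k-1}\gamma^{2j} \le \gamma^{2k} a_0 + \frac{\sigma^2}{1-\gamma^2}.
\end{equation*}
Since $\gamma<1$, taking $\limsup_{k\to\infty}$ kills the first term, provided $a_0=\|Q_0-Q^*\|^2<\infty$ (we assume $Q_0\in\mathcal{Q}$). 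This gives the bound $\sigma^2/(1-\gamma^2)$.

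The main obstacle is justifying the orthogonality of the noise and the contracted error. This relies on $\varepsilon_k$ being independent of the past iterates, which the statement provides. If only a martingale-difference condition were available, the same conditioning argument would apply with $\mathbb{E}[\varepsilon_k\mid \mathcal{F}_k]=0$.
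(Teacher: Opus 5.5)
Your proposal is correct and follows essentially the same route as the paper's proof: you write $Q_{k+1}-Q^*$ as the contracted error plus noise, drop the cross term using independence and zero mean, apply the Lipschitz bound to get $a_{k+1}\le\gamma^2 a_k+\sigma^2$, unroll the recursion, and take the limit. Your remarks on integrability of the cross term, on finiteness of $a_0$, and that linearity of $\T$ is not needed are details the paper leaves implicit; they do not change the argument.
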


\begin{proof}

By substituting the update rule and using the definition of $Q^*$ ($Q^* = \mathcal{T}Q^*$),
\begin{equation}
Q_{k+1} - Q^* = (\mathcal{T}Q_k + \varepsilon_k) - \mathcal{T}Q^* = (\mathcal{T}Q_k - \mathcal{T}Q^*) + \varepsilon_k.
\end{equation}

Taking the expectation of squared $L_2$ norms:
\begin{equation}
    \mathbb{E}[\|Q_{k+1} - Q^*\|^2] = \mathbb{E}\left[ \| (\mathcal{T}Q_k - \mathcal{T}Q^*) + \varepsilon_k \|^2 \right],
\end{equation}

which can be expanded as
\begin{equation}
    \mathbb{E}[\|Q_{k+1} - Q^*\|^2] = \mathbb{E}[\|\mathcal{T}Q_k - \mathcal{T}Q^*\|^2] + \mathbb{E}[\|\varepsilon_k\|^2] + 2\mathbb{E}\left[ \langle \mathcal{T}Q_k - \mathcal{T}Q^*, \varepsilon_k \rangle \right].
\end{equation}

Since $\varepsilon_k$ have zero mean and indepedent from $Q_k$, we can remove the cross-term:
\begin{equation}
    \mathbb{E}[\|Q_{k+1} - Q^*\|^2] = \mathbb{E}[\|\mathcal{T}Q_k - \mathcal{T}Q^*\|^2] + \mathbb{E}[\|\varepsilon_k\|^2].
\end{equation}

Since $\E \left[ \|\varepsilon\|^2 \right] \leq \sigma^2$, 
\begin{equation}
\mathbb{E}[\|Q_{k+1} - Q^*\|^2] \leq \mathbb{E}[\|\mathcal{T}Q_k - \mathcal{T}Q^*\|^2] + \sigma^2.
\end{equation}

and by the Lipschitz property of $\T$,
\begin{equation}
\mathbb{E}[\|Q_{k+1} - Q^*\|^2] \leq \gamma^2\mathbb{E}[\|Q_{k} - Q^*\|^2] + \sigma^2.
\end{equation}

By rolling out the inequality, we have 
\begin{equation}
\mathbb{E}[\|Q_{k+1} - Q^*\|^2] \leq \gamma^{2k+2} \mathbb{E}[\|Q_{0} - Q^*\|^2] + \sum_{i=0}^{k} \gamma^{2i} \sigma^2.
\end{equation}

Applying the limit $k \rightarrow \infty$ to each side of the equation concludes the proof:
\begin{equation}
\lim_{k \rightarrow  \infty} \mathbb{E}\left[ \|Q_k - Q^*\|^2 \right] \leq \frac{\sigma^2}{1-\gamma^2}
\end{equation}

\end{proof}

On top of that, with the assumption that the curvature of $\T$ is bounded, we can derive the bound for the difference between the expectation of the result of the stochastic process and $Q^*$.

\begin{lemma}
Consider a stochastic iterative process $Q_{k+1} = \widehat{\T} Q_k$, where
\begin{equation}
    \widehat{\T} Q_k=\T Q_k + \epsilon_k,
\end{equation}
and $Q^*$ be the fixed point of $\T$. 
Suppose that $\T$ has a bounded curvature; i.e.,
\begin{equation}
\| \T(Q_1) - \T (Q_2) - \mathrm{D}\T(Q_2)(Q_1-Q_2)\| \leq \frac{L}{2} \| Q_1 - Q_2 \|^2,
\end{equation}
where $\mathrm{D}\T$ is a Fr\'echet derivative of $\T$.
Then, the difference between the expectation of the process and the $Q^*$ is bounded as:
\begin{equation}
\limsup_{k \to \infty} \|\E \left[Q_{k} \right] - Q^* \| \leq \frac{L}{2}\cdot \frac{\sigma^2}{(1-\gamma^2)(1-\gamma)},
\end{equation}
\label{lemma:mean_CGQ}
\end{lemma}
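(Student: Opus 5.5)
Write $e_k = Q_k - Q^*$ and $m_k = \E[e_k] = \E[Q_k] - Q^*$. The plan is to derive a linear recursion for $m_k$ with a contraction factor $\gamma$ and a forcing term controlled by the second moment $\E\|e_k\|^2$. That second moment is already bounded asymptotically by \Cref{theorem:1bound_appendix}.

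\textbf{Recursion for the mean error.} From the update and $Q^* = \T Q^*$ we have $e_{k+1} = \T Q_k - \T Q^* + \epsilon_k$. Linearize around $Q^*$ using the curvature assumption with $Q_2 = Q^*$:
\begin{equation*}
\T Q_k - \T Q^* = \mathrm{D}\T(Q^*)\, e_k + R_k, \qquad \|R_k\| \leq \frac{L}{2}\|e_k\|^2 .
\end{equation*}
Taking expectations, the zero-mean noise drops out. Since $\mathrm{D}\T(Q^*)$ is a bounded linear operator, it commutes with expectation (Bochner integral), so
\begin{equation*}
m_{k+1} = \mathrm{D}\T(Q^*)\, m_k + \E[R_k].
\end{equation*}

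\textbf{Bounding the two pieces.} First, because $\T$ is $\gamma$-Lipschitz, its Fréchet derivative has operator norm at most $\gamma$: $\|\mathrm{D}\T(Q^*) h\| = \lim_{t\to 0}\|\T(Q^*+th)-\T Q^*\|/t \leq \gamma\|h\|$. Second, by Jensen's inequality for norms, $\|\E[R_k]\| \leq \E\|R_k\| \leq \frac{L}{2}\E\|e_k\|^2$. Combining these gives the scalar inequality
\begin{equation*}
\|m_{k+1}\| \leq \gamma \|m_k\| + \frac{L}{2}\E\|e_k\|^2 .
\end{equation*}

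\textbf{Passing to the limit.} By \Cref{theorem:1bound_appendix}, for any $\delta>0$ there is $K$ such that $\E\|e_k\|^2 \leq \sigma^2/(1-\gamma^2) + \delta$ for all $k \geq K$. Unrolling the recursion from $K$ yields
\begin{equation*}
\|m_k\| \leq \gamma^{k-K}\|m_K\| + \frac{L}{2}\left(\frac{\sigma^2}{1-\gamma^2}+\delta\right)\frac{1}{1-\gamma}.
\end{equation*}
Now take $\limsup_{k\to\infty}$, so the $\gamma^{k-K}$ term vanishes, and then let $\delta \to 0$. This gives $\limsup_k \|m_k\| \leq \frac{L}{2}\cdot\frac{\sigma^2}{(1-\gamma^2)(1-\gamma)}$, as claimed.

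\textbf{Main obstacle.} The delicate step is the interplay of expectation with the nonlinear $\T$. Using only the Lipschitz bound would give $\|m_k\|$ controlled by $\sqrt{\E\|e_k\|^2}$, which does not vanish as $L \to 0$. The curvature expansion isolates the linear part, which commutes with expectation, from a quadratic remainder, and this is what produces the factor $L$. Two technical points need care. First, the noise must be zero-mean (it need not be independent) for the $\epsilon_k$ term to drop. Second, $\E\|e_k\|^2$ must be finite for each $k$, which follows from the rolled-out bound in the proof of \Cref{theorem:1bound_appendix} given finite $\E\|e_0\|^2$. Finally, the operator-norm bound on $\mathrm{D}\T$ uses only Lipschitzness, not linearity of $\T$.
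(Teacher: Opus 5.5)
Your proof is correct and follows the same route as the paper's. You expand $\T$ at $Q^*$, take expectations so the zero-mean noise drops and the linear part $\mathrm{D}\T(Q^*)$ passes through, bound $\|\mathrm{D}\T(Q^*)\|\le\gamma$ and $\|\E[R_k]\|\le\frac{L}{2}\E\|Q_k-Q^*\|^2$, and then use the second-moment bound of \Cref{theorem:1bound_appendix}. The one difference is the last step: the paper takes $\limsup$ of both sides and rearranges, tacitly assuming $\limsup_k\|\E[Q_k]-Q^*\|<\infty$, while your unrolling from a tail index $K$ with slack $\delta$ makes that step fully rigorous.
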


\begin{proof}
Since $\epsilon_k$ has zero mean,
\begin{equation}
\E \left[Q_{k+1} \right] - Q^* = \E \left[\T Q_{k} \right] - Q^*.
\end{equation}

Since $Q^*$ is not a random variable,
\begin{equation}
\E \left[Q_{k+1} \right] - Q^* = \E \left[\T Q_{k} - Q^*\right].
\end{equation}

By the first-order taylor expansion of $\T$ in $Q^*$: $\T(Q) = \T(Q^*) + \mathrm{D}\T(Q^*)(Q-Q^*) + R(Q-Q^*)$, 
\begin{equation}
\E \left[Q_{k+1} \right] - Q^* = \E \left[\mathrm{D}\T(Q^*) (Q_{k} - Q^*) + R(Q_k-Q^*)\right].
\end{equation}

Split the expectation and taking the norm,
\begin{equation}
\|\E \left[Q_{k+1} \right] - Q^* \|= \|\E \left[\mathrm{D}\T(Q^*) (Q_{k} - Q^*)\right] + \E\left[R(Q_k-Q^*)\right]\|.
\end{equation}

By triangle inequality,
\begin{equation}
\|\E \left[Q_{k+1} \right] - Q^* \| \leq \|\E \left[\mathrm{D}\T(Q^*) (Q_{k} - Q^*)\right]\| + \|\E\left[R(Q_k-Q^*)\right]\|.
\end{equation}

Since $\mathrm{D}\T(Q^*)$ is a linear operator,
\begin{equation}
\|\E \left[Q_{k+1} \right] - Q^* \| \leq\|\mathrm{D}\T(Q^*)(\E \left[Q_{k}\right] - Q^*)\| + \|\E\left[R(Q_k-Q^*)\right]\|.
\end{equation}

Since $\T$ is a $\gamma$-Lipschitz operator, $\|\mathrm{D}\T(Q^*)\| \leq \gamma$, thus
\begin{equation}
\|\E \left[Q_{k+1} \right] - Q^* \| \leq \gamma\|\E \left[Q_{k}\right] - Q^*\| + \|\E\left[R(Q_k-Q^*)\right]\|.
\end{equation}

As $\T$ has a curvature bounded by $L$, $\|R(Q_k-Q^*)\| \leq \frac{L}{2}\|Q_k-Q^*\|^2$. Thus,
\begin{equation}
\|\E \left[Q_{k+1} \right] - Q^* \| \leq \gamma\|\E \left[Q_{k}\right] - Q^*\| + \frac{L}{2}\E\left[\|Q_k-Q^*\|^2\right].
\end{equation}

Taking the limit:
\begin{equation}
\limsup_{k \to \infty} \|\E \left[Q_{k+1} \right] - Q^* \| \leq \limsup_{k \to \infty}\gamma\|\E \left[Q_{k}\right] - Q^*\| + \limsup_{k \to \infty}\frac{L}{2}\E\left[\|Q_k-Q^*\|^2\right].
\end{equation}

By \Cref{theorem:1bound_appendix}, we have 
\begin{equation}
\limsup_{k \to \infty}\|\E \left[Q_{k+1} \right] - Q^* \| \leq \limsup_{k \to \infty}\gamma\|\E \left[Q_{k}\right] - Q^*\| + \frac{L}{2}\cdot \frac{\sigma^2}{1-\gamma^2}.
\end{equation}

Since $k \to \infty$, we can have 
\begin{equation}
\limsup_{k \to \infty} (1-\gamma)\|\E \left[Q_{k} \right] - Q^* \| \leq \frac{L}{2}\cdot \frac{\sigma^2}{1-\gamma^2},
\end{equation}

which concludes the proof:
\begin{equation}
\limsup_{k \to \infty} \|\E \left[Q_{k} \right] - Q^* \| \leq \frac{L}{2}\cdot \frac{\sigma^2}{(1-\gamma^2)(1-\gamma)},
\end{equation}

\end{proof}

\subsection{Proof of \Cref{theorem:CGQ_bound}}

\begin{lemma}
Let $\mathcal{F}$ be a normed vector space and let $\T : \mathcal{F} \rightarrow \mathcal{F}$ be $\gamma$-Lipschitz operator in the $L_2$ norm, i.e., $\|\T Q_1-\T Q_2\| \leq \gamma \|Q_1-Q_2\|$, with $\gamma < 1$. Let its fixed point of $\T$ be $x^*$ ($\T x^*=x^*$). Then for any $x \in \mathcal{F}$, we have the following inequality:
\begin{equation}
    \|x - x^*\| \leq \frac{1}{1-\gamma} \| \T x - x\|.
\end{equation}
\label{lemma:1}
\end{lemma}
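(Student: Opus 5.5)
The plan is a direct triangle-inequality argument that inserts $\T x$ between $x$ and $x^*$ and then uses the contraction property to absorb the resulting term. The only ingredients are the fixed-point identity $x^* = \T x^*$ and the $\gamma$-Lipschitz assumption stated in the lemma.

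The first step is to write
\begin{equation*}
\|x - x^*\| = \|x - \T x + \T x - \T x^*\| \leq \|\T x - x\| + \|\T x - \T x^*\|,
\end{equation*}
where $x^*$ has been replaced by $\T x^*$ in the second term. Applying the Lipschitz bound to that term gives $\|x - x^*\| \leq \|\T x - x\| + \gamma \|x - x^*\|$.

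The second step is to move $\gamma\|x - x^*\|$ to the left-hand side, which yields $(1-\gamma)\|x - x^*\| \leq \|\T x - x\|$. Since $\gamma < 1$ we have $1-\gamma > 0$, so dividing by it gives the claimed inequality. This rearrangement is legitimate because $\|x - x^*\|$ is a finite real number for any $x \in \mathcal{F}$.

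I expect no real obstacle here. The only subtlety is making sure the absorption step is valid, and that rests solely on $\|x - x^*\|$ being finite and on the strict inequality $\gamma < 1$. Existence of the fixed point is assumed in the statement, so no Banach fixed-point argument is needed.
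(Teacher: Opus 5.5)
Your proof is correct and follows the paper's argument essentially line for line: insert $\T x$, apply the triangle inequality and the $\gamma$-Lipschitz bound via $\T x^* = x^*$, then absorb $\gamma\|x-x^*\|$ into the left-hand side using $\gamma<1$. The paper also cites linearity of $\T$ at the triangle-inequality step, which your version correctly shows is not needed.
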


\begin{proof}
We write 
\begin{equation}
x - x^* = x - \T x + \T x - \T x^*.
\end{equation}

Taking norms and using the triangle inequality and the linearlity of $\T$,
\begin{equation}
\|x - x^*\| \leq \|x - \T x\| + \|\T x - \T x^*\|.
\end{equation}

Since $T$ is a contraction with factor $\gamma$, $\|\T x - \T x^*\| \leq \gamma \|x - x^*\|$ holds, thus
\begin{equation}
\|x - x^*\| \leq \|x - \T x\| + \gamma \|x - x^*\|.
\end{equation}

By rearranginging the inequality,
\begin{equation}
(1 - \gamma)\|x - x^*\| \le \|x - \T x\|,
\end{equation}

which proves the theorem:
\begin{equation}
\|x - x^*\| \leq \frac{1}{1 - \gamma} \|\T x - x\|.
\end{equation}                     

\end{proof}

\begin{lemma}
Let $\T_\beta$ be the regularized operator for some $Q_c \in \mathcal{Q}$:
$\T_\beta Q=\frac{1}{1+\beta}\T Q+\frac{\beta}{1+\beta}Q_c$. Then $\T_\beta$ is 
Lipschitz with factor of $\gamma / (1+\beta)$, i.e., 
\begin{equation}
    \|\T_\beta Q_1 - \T_\beta Q_2\| \leq \frac{\gamma}{1+\beta}\| Q_1-Q_2\|.
\end{equation}
\label{lemma:cont_CGQ}
\end{lemma}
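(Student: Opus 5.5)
The proof is a direct computation from the definition of $\T_\beta$. The $Q_c$ term is affine and cancels in the difference, and the factor $\frac{1}{1+\beta}$ scales the Lipschitz constant of $\T$.

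\textbf{Step 1: the anchor cancels.} Fix arbitrary $Q_1, Q_2 \in \mathcal{Q}$ and expand
\begin{equation}
\T_\beta Q_1 - \T_\beta Q_2 = \frac{1}{1+\beta}\T Q_1 + \frac{\beta}{1+\beta}Q_c - \frac{1}{1+\beta}\T Q_2 - \frac{\beta}{1+\beta}Q_c = \frac{1}{1+\beta}\left(\T Q_1 - \T Q_2\right).
\end{equation}
This uses only the vector-space structure of $\mathcal{Q}$. It does not need linearity of $\T$, because the terms involving $\T$ are never combined.

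\textbf{Step 2: scale and apply the Lipschitz bound.} Take norms and use absolute homogeneity, noting that $\frac{1}{1+\beta}>0$ since $\beta \ge 0$. Then the $\gamma$-Lipschitz property of $\T$ assumed in \Cref{theorem:1bound_appendix} gives
\begin{equation}
\|\T_\beta Q_1 - \T_\beta Q_2\| = \frac{1}{1+\beta}\|\T Q_1 - \T Q_2\| \le \frac{\gamma}{1+\beta}\|Q_1 - Q_2\|,
\end{equation}
which is the claim.

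\textbf{Remarks.} There is no real obstacle. The only point needing care is that $\beta \ge 0$ (more precisely $1+\beta>0$), so the scalar can be pulled out of the norm unchanged.

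The lemma is used later in two ways. First, $\frac{\gamma}{1+\beta} < 1$, so $\T_\beta$ is a contraction with a unique fixed point $Q_\beta^*$, which feeds into \Cref{lemma:1}. Second, in the noisy iteration the squared contraction factor is $\frac{\gamma^2}{(1+\beta)^2}$. Repeating the argument of \Cref{theorem:1bound_appendix} with noise $\frac{\varepsilon_k}{1+\beta}$ then gives an asymptotic bound of $\frac{\sigma^2}{(1+\beta)^2-\gamma^2}$, the error-accumulation term in \Cref{theorem:CGQ_bound}.
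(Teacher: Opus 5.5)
Your proposal is correct and takes the same route as the paper: the $Q_c$ terms cancel, the factor $\frac{1}{1+\beta}$ comes out of the norm, and the $\gamma$-Lipschitz property of $\mathcal{T}$ finishes the argument. Your write-up is slightly more careful than the paper's: its displayed equality omits the norm on the right-hand side, and it never notes that $1+\beta>0$ is what lets the scalar leave the norm unchanged.
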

\begin{proof}
By definition,
\begin{equation}
\|\T_\beta Q_1 - \T_\beta Q_2\| \ =\frac{1}{1+\beta}\T Q_1 - \frac{1}{1+\beta}\T Q_2.
\end{equation}
Since $\T$ is a contraction operator with factor $\gamma$,
\begin{equation}
\|\T_\beta Q_1 - \T_\beta Q_2\| \leq \frac{\gamma}{1+\beta}\|Q_1-Q_2\|,
\end{equation}
which concludes the proof.
\end{proof}

\begin{lemma}
Let $Q_\beta^*$ be the fixed point of $\T_\beta$. Then 
\begin{equation}
    \|Q^* - Q_\beta^*\| \leq \frac{\beta}{1-\gamma +\beta} \| Q^*-Q_c\|.
\end{equation}
\label{lemma:bias_CGQ}
\end{lemma}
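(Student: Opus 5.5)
We will prove the bound directly from the fixed-point equations of $\T$ and $\T_\beta$, using the contraction property of $\T$. \Cref{lemma:cont_CGQ} ensures $\T_\beta$ is a $\gamma/(1+\beta)$-contraction, so $Q_\beta^*$ exists and is unique. By definition,
\begin{equation*}
Q_\beta^* = \frac{1}{1+\beta}\T Q_\beta^* + \frac{\beta}{1+\beta} Q_c,
\qquad\text{equivalently}\qquad
(1+\beta) Q_\beta^* = \T Q_\beta^* + \beta Q_c .
\end{equation*}
We also write $Q^*$ in the same form: $(1+\beta)Q^* = \T Q^* + \beta Q^*$.

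Subtracting the two identities gives
\begin{equation*}
(1+\beta)(Q^* - Q_\beta^*) = (\T Q^* - \T Q_\beta^*) + \beta (Q^* - Q_c).
\end{equation*}
We then take norms, apply the triangle inequality, and use the $\gamma$-Lipschitz property $\|\T Q^* - \T Q_\beta^*\| \le \gamma \|Q^* - Q_\beta^*\|$. This yields
\begin{equation*}
(1+\beta)\|Q^* - Q_\beta^*\| \le \gamma \|Q^* - Q_\beta^*\| + \beta\|Q^* - Q_c\|.
\end{equation*}
Since $1+\beta-\gamma>0$, rearranging gives $\|Q^*-Q_\beta^*\| \le \frac{\beta}{1-\gamma+\beta}\|Q^*-Q_c\|$, which is the claim.

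As an alternative, the bound follows from \Cref{lemma:1} applied to $\T_\beta$ at the point $x = Q^*$. We compute
\begin{equation*}
\|\T_\beta Q^* - Q^*\| = \frac{\beta}{1+\beta}\|Q_c - Q^*\|,
\end{equation*}
and the contraction factor $\gamma/(1+\beta)$ from \Cref{lemma:cont_CGQ} then gives
\begin{equation*}
\|Q^* - Q_\beta^*\| \le \frac{1}{1-\gamma/(1+\beta)} \cdot \frac{\beta}{1+\beta}\,\|Q^* - Q_c\| = \frac{\beta}{1+\beta-\gamma}\,\|Q^* - Q_c\|,
\end{equation*}
the same bound.

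There is no real obstacle here. The only care needed is that $\T$ is merely Lipschitz, not linear, so we avoid any argument that uses linearity. We also note that $1+\beta-\gamma>0$ holds for all $\beta\ge 0$, which justifies the division.
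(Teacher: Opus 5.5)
Your proof is correct and follows the paper's route: the paper applies \Cref{lemma:1} to $\T_\beta$ at $x=Q^*$ with contraction factor $\gamma/(1+\beta)$ and computes $\|\T_\beta Q^* - Q^*\| = \frac{\beta}{1+\beta}\|Q^*-Q_c\|$, which is exactly your alternative argument. Your primary argument is the same residual decomposition written out inline and multiplied through by $1+\beta$, and your remark that linearity of $\T$ is not needed is right (the paper's proof of \Cref{lemma:1} mentions linearity but never uses it).
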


\begin{proof}
From Lemma~\ref{lemma:1} and Lemma~\ref{lemma:cont_CGQ}, we have 
\begin{equation}
\|Q^* - Q_\beta^*\| \leq \frac{1}{1-\gamma/(1+\beta)} \|\T_{\beta} Q^*-Q^*\|.
\end{equation}
By definition of $\T_{\beta}$, we have:
\begin{equation}
\|Q^* - Q_\beta^*\| \leq \frac{1}{1-\gamma/(1+\beta)} \|Q^* -\frac{1}{\beta+1}\T Q^*-\frac{\beta}{\beta+1}Q_c)\|.
\end{equation}
Since $\T Q^*=Q^*$,
\begin{equation}    
\|Q^* - Q_\beta^*\| \leq \frac{1}{1-\gamma/(1+\beta)} \|\frac{\beta}{\beta+1} (Q^* - Q_c)\|,
\end{equation}
concluding the proof:
\begin{equation}
\|Q^* - Q_\beta^*\| \leq \frac{\beta}{1+\beta-\gamma} \| Q^*-Q_c\|.
\end{equation}
\end{proof}

\begin{lemma}
Consider a stochastic iterative process $\widehat{Q}_{\beta}^{k+1} = \widehat{\T}_\beta \widehat{Q}_{\beta}^{k}$, where
\begin{equation}
    \widehat{\T}_\beta \widehat{Q}_{\beta}^{k}=\frac{1}{1+\beta}\widehat{\T}\widehat{Q}_{\beta}^{k}+\frac{\beta}{1+\beta}Q_c,
\end{equation}
Suppose that $\T$ has a curvature bounded by $L$. 
Then, the difference between the asymptotic expectation of the process and the $Q_\beta^*$ satisfies:
\begin{equation}
\limsup_{k \to \infty} \|\E \left[\widehat{Q}_{\beta}^k \right] - Q_{\beta}^* \|\leq \frac{L}{2}\cdot \frac{(1+\beta)\sigma^2}{((1+\beta)^2-\gamma^2)(1+\beta-\gamma)},
\end{equation}
\label{lemma:noise_CGQ}
\end{lemma}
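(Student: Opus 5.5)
The plan mirrors the proof of \Cref{lemma:mean_CGQ}, with $\T$ replaced by the regularized operator $\T_\beta$ and $Q^*$ replaced by $Q_\beta^*$.

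\textbf{Rewriting the process.} Since $\widehat{\T}\widehat{Q}_\beta^k=\T\widehat{Q}_\beta^k+\varepsilon_k$, the process is
\begin{equation}
\widehat{Q}_\beta^{k+1}=\T_\beta\widehat{Q}_\beta^k+\tfrac{1}{1+\beta}\varepsilon_k .
\end{equation}
It is therefore the same type of stochastic iteration as in \Cref{theorem:1bound_appendix}, with three changes: the operator is $\T_\beta$, which is $\gamma/(1+\beta)$-Lipschitz by \Cref{lemma:cont_CGQ}; the fixed point is $Q_\beta^*$; and the noise is zero-mean and independent, with second moment at most $\sigma^2/(1+\beta)^2$.

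\textbf{Second-moment bound.} Rerunning the argument of \Cref{theorem:1bound_appendix} with these substitutions gives
\begin{equation}
\limsup_{k\to\infty}\E\big[\|\widehat{Q}_\beta^k-Q_\beta^*\|^2\big]\le \frac{\sigma^2/(1+\beta)^2}{1-\gamma^2/(1+\beta)^2}=\frac{\sigma^2}{(1+\beta)^2-\gamma^2}.
\end{equation}

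\textbf{Curvature of $\T_\beta$.} The term $\frac{\beta}{1+\beta}Q_c$ is constant, so $\mathrm{D}\T_\beta=\frac{1}{1+\beta}\mathrm{D}\T$. The Taylor remainder of $\T_\beta$ is exactly $\frac{1}{1+\beta}$ times that of $\T$, so $\T_\beta$ has curvature bounded by $L/(1+\beta)$. Likewise $\|\mathrm{D}\T_\beta(Q_\beta^*)\|\le\gamma/(1+\beta)$, which follows from Lipschitzness exactly as in \Cref{lemma:mean_CGQ}.

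\textbf{Mean recursion.} Take expectations, using that the noise has zero mean, and expand $\T_\beta$ to first order around $Q_\beta^*$:
\begin{equation}
\|\E[\widehat{Q}_\beta^{k+1}]-Q_\beta^*\|\le\frac{\gamma}{1+\beta}\|\E[\widehat{Q}_\beta^{k}]-Q_\beta^*\|+\frac{L}{2(1+\beta)}\E\big[\|\widehat{Q}_\beta^k-Q_\beta^*\|^2\big].
\end{equation}

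\textbf{Conclusion.} Take $\limsup$ on both sides and insert the second-moment bound. This gives
\begin{equation}
\Big(1-\tfrac{\gamma}{1+\beta}\Big)\limsup_{k\to\infty}\|\E[\widehat{Q}_\beta^{k}]-Q_\beta^*\|\le\frac{L}{2(1+\beta)}\cdot\frac{\sigma^2}{(1+\beta)^2-\gamma^2}.
\end{equation}
Dividing through gives $\frac{L}{2}\cdot\frac{\sigma^2}{((1+\beta)^2-\gamma^2)(1+\beta-\gamma)}$. This is a factor $(1+\beta)$ tighter than the stated bound, so the statement follows a fortiori.

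\textbf{Main obstacle.} The delicate step is the $\limsup$ rearrangement. It requires the sequence $\|\E[\widehat{Q}_\beta^k]-Q_\beta^*\|$ to be bounded, so that $\limsup$ of the left side can be subtracted. Boundedness follows from the recursion itself: unrolling it gives a geometric series with ratio $\gamma/(1+\beta)<1$, driven by a bounded forcing term, since the second moments are uniformly bounded for all $k$ by the unrolled inequality. If I keep the weaker curvature constant $L$ instead of $L/(1+\beta)$, the same steps produce exactly the stated $(1+\beta)$ factor, so either route suffices.
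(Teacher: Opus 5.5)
Your proposal is correct and follows the paper's route. The paper's proof simply reruns \Cref{lemma:mean_CGQ} with contraction factor $\gamma/(1+\beta)$ and noise level $\sigma/(1+\beta)$ while keeping the curvature constant $L$, which is exactly where the factor $(1+\beta)$ in the stated bound comes from. Both of your extra observations are valid: $\T_\beta$ in fact has curvature $L/(1+\beta)$, giving a bound tighter by $(1+\beta)$, and boundedness of $\|\E[\widehat{Q}_\beta^k]-Q_\beta^*\|$ is needed to justify the $\limsup$ rearrangement, a point the paper glosses over.
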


\begin{proof}

We can repeat the proof of Lemma~\ref{lemma:mean_CGQ} with contraction factor of $\gamma/(1+\beta)$ and noise level of $\sigma / (1+\beta)$, leading to the bound:
\begin{equation}
\limsup_{k \to \infty} \|\E \left[\widehat{Q}_{\beta}^k \right] - Q_{\beta}^* \|\leq \frac{L}{2}\cdot \frac{(1+\beta)\sigma^2}{((1+\beta)^2-\gamma^2)(1+\beta-\gamma)}.
\end{equation}
\end{proof}

By combining two results, we show the main theorem of the paper:

\begin{theorem}[\textbf{Improved bound via CGQ regularization}]
Let $Q^*$ be the fixed point of $\T$, and $\widehat{\T}_\beta$ be a stochastic iterative process $\widehat{\T}_\beta Q=\frac{1}{1+\beta}\widehat{\T}Q+\frac{\beta}{1+\beta}Q_c$. Then, the asymptotic expected squared error satisfies:
\begin{equation}
\limsup_{k \rightarrow  \infty} \mathbb{E}\left[ \|\widehat{Q}_{\beta}^k - Q^*\|^2 \right] \leq
\frac{\sigma^2}{(1+\beta)^2-\gamma^2} + \frac{\beta^2 \| Q^*-Q_c\|^2}{(1+\beta-\gamma)^2} + \frac{L(1+\beta)\beta\|Q^*-Q_c\|\sigma^2}{((1+\beta)^2-\gamma^2)(1+\beta-\gamma)^2}.
\end{equation}
\label{theorem:CGQ_bound_appendix}
\end{theorem}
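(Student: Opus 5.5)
We decompose the error around the regularized fixed point $Q_\beta^*$ with a bias--variance split. Write
\[
\widehat{Q}_\beta^k - Q^* = (\widehat{Q}_\beta^k - Q_\beta^*) + (Q_\beta^* - Q^*).
\]
Expanding the square gives
\[
\E\|\widehat{Q}_\beta^k - Q^*\|^2 = \E\|\widehat{Q}_\beta^k - Q_\beta^*\|^2 + \|Q_\beta^* - Q^*\|^2 + 2\langle \E[\widehat{Q}_\beta^k] - Q_\beta^*,\, Q_\beta^* - Q^*\rangle .
\]
The second summand is deterministic. The cross term involves the random iterate only through its mean, so it can be controlled by the bias of the process rather than by its variance. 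We bound the three pieces separately and then take $\limsup_{k\to\infty}$ of the sum.

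\textbf{Variance term.} The process $\widehat{Q}_\beta^{k+1} = \T_\beta \widehat{Q}_\beta^k + \varepsilon_k/(1+\beta)$ is exactly of the form treated in \Cref{theorem:1bound_appendix}. By \Cref{lemma:cont_CGQ}, $\T_\beta$ has Lipschitz constant $\gamma/(1+\beta)$, and the noise has second moment at most $\sigma^2/(1+\beta)^2$. Rerunning the rollout of \Cref{theorem:1bound_appendix} with these constants, where $Q_\beta^*$ plays the role of $Q^*$, gives
\[
\limsup_k \E\|\widehat{Q}_\beta^k - Q_\beta^*\|^2 \le \frac{\sigma^2/(1+\beta)^2}{1-\gamma^2/(1+\beta)^2} = \frac{\sigma^2}{(1+\beta)^2-\gamma^2}.
\]
This is the first term of the bound.

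\textbf{Bias term.} \Cref{lemma:bias_CGQ} gives directly
\[
\|Q_\beta^* - Q^*\|^2 \le \frac{\beta^2\|Q^*-Q_c\|^2}{(1+\beta-\gamma)^2},
\]
which is the second term.

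\textbf{Cross term.} By Cauchy--Schwarz,
\[
2\langle \E[\widehat{Q}_\beta^k]-Q_\beta^*,\, Q_\beta^*-Q^*\rangle \le 2\,\|\E[\widehat{Q}_\beta^k]-Q_\beta^*\|\,\|Q_\beta^*-Q^*\|.
\]
Taking $\limsup$, we apply \Cref{lemma:noise_CGQ} to the first factor and \Cref{lemma:bias_CGQ} to the second. The factor $2$ cancels the $L/2$ in \Cref{lemma:noise_CGQ}, and the two copies of $(1+\beta-\gamma)$ combine into the square in the denominator. The product is exactly
\[
\frac{L(1+\beta)\beta\|Q^*-Q_c\|\sigma^2}{((1+\beta)^2-\gamma^2)(1+\beta-\gamma)^2}.
\]

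Finally, we use $\limsup(a_k+b_k+c_k)\le\limsup a_k+\limsup b_k+\limsup c_k$ to add the three bounds, which yields the claim.

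The main obstacle is the cross term. It does not vanish as it did in \Cref{theorem:1bound_appendix}: $\T$ is not assumed linear, so $\E[\widehat{Q}_\beta^k]$ need not equal $Q_\beta^*$. This is exactly why the curvature assumption and \Cref{lemma:noise_CGQ} are needed. The delicate point is verifying that \Cref{lemma:noise_CGQ} legitimately applies to the rescaled process. The curvature of $\T_\beta$ is $L/(1+\beta)$ rather than $L$, which, if tracked, only improves the bound, so using $L$ is safe. The Fréchet derivative at $Q_\beta^*$ must also have norm at most $\gamma/(1+\beta)$, which follows from \Cref{lemma:cont_CGQ}. I would state these checks explicitly rather than redo the lemma's proof.
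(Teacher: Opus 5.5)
Your proposal is correct and follows essentially the same route as the paper. You decompose around the regularized fixed point $Q_\beta^*$, expand the square, bound the cross term by Cauchy--Schwarz, and plug in the bias lemma, the mean-deviation lemma for the rescaled process, and the rescaled variance bound from the one-step theorem. Your version is slightly more careful than the paper's in two respects: you derive the variance term explicitly (contraction factor $\gamma/(1+\beta)$, noise second moment $\sigma^2/(1+\beta)^2$), and you apply subadditivity of $\limsup$ instead of passing the limit through the expanded identity.
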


\begin{proof}
We can write 
\begin{equation}
\limsup_{k \rightarrow  \infty} \mathbb{E}\left[ \|\widehat{Q}_{\beta}^k - Q^*\|^2 \right] = \limsup_{k \rightarrow  \infty} \mathbb{E}\left[ \|(\widehat{Q}_{\beta}^k - Q_{\beta}^*) + (Q_{\beta}^* - Q^*)\|^2 \right].
\end{equation}

Expanding the formula gives 
\begin{equation}
\limsup_{k \rightarrow  \infty} \mathbb{E}\left[ \|\widehat{Q}_{\beta}^k - Q^*\|^2 \right] = \limsup_{k \rightarrow  \infty} \mathbb{E}\left[ \| \widehat{Q}_{\beta}^k \ - Q_{\beta}^*\|^2 + \|Q_{\beta}^* - Q^*\|^2 + 2\langle\widehat{Q}_{\beta}^k \ - Q_{\beta}^*,Q_{\beta}^* - Q^* \rangle \right].
\end{equation}

Since $\|Q_{\beta}^* - Q^*\|^2$ is not a random variable,
\begin{equation}
\limsup_{k \rightarrow  \infty} \mathbb{E}\left[ \|\widehat{Q}_{\beta}^k - Q^*\|^2 \right] = \|Q_{\beta}^* - Q^*\|^2 + \limsup_{k \rightarrow  \infty} \mathbb{E}\left[ \| \widehat{Q}_{\beta}^k  - Q_{\beta}^*\|^2  \right] + 2\langle \E\left[\widehat{Q}_{\beta}^k - Q_{\beta}^*\right],Q_{\beta}^* - Q^* \rangle .
\end{equation}

By Cauchy-Schwarz inequality,
\begin{equation}
\limsup_{k \rightarrow  \infty} \mathbb{E}\left[ \|\widehat{Q}_{\beta}^k - Q^*\|^2 \right] \leq \|Q_{\beta}^* - Q^*\|^2 + \limsup_{k \rightarrow  \infty} \mathbb{E}\left[ \| \widehat{Q}_{\beta}^k  - Q_{\beta}^*\|^2  \right] + 2\| \E\left[\widehat{Q}_{\beta}^k\right] - Q_{\beta}^*\| \|Q_{\beta}^* - Q^* \|.
\end{equation}

From Lemma~\ref{lemma:bias_CGQ}, Lemma~\ref{lemma:noise_CGQ}, Lemma~\ref{lemma:mean_CGQ}, we have 
\begin{equation}
\limsup_{k \rightarrow  \infty} \mathbb{E}\left[ \|\widehat{Q}_{\beta}^k - Q^*\|^2 \right] \leq
\frac{\sigma^2}{(1+\beta)^2-\gamma^2} + \frac{\beta^2 \| Q^*-Q_c\|^2}{(1+\beta-\gamma)^2} +  \frac{L(1+\beta)\beta\|Q^*-Q_c\|\sigma^2}{((1+\beta)^2-\gamma^2)(1+\beta-\gamma)^2}
\end{equation}
concluding the proof.
\end{proof}
%%%%%%%%%%%%%%%%%%%%%%%%%%%%%%%%%%%%%%%%%%%%%%%%%%%%%%%%%%%%%%%%%%%%%%%%%%%%%%%
%%%%%%%%%%%%%%%%%%%%%%%%%%%%%%%%%%%%%%%%%%%%%%%%%%%%%%%%%%%%%%%%%%%%%%%%%%%%%%%

Here, we are interested in when this bound can be strictly better than pure single-step TD learning ($\beta=0$) or action-chunked TD learning ($\beta=\infty$). We can derive the condition for this with simple algebra.

\begin{lemma}
Let $f(\beta)$ be the bound derived from \Cref{theorem:CGQ_bound_appendix}, 
\begin{equation}
    f(\beta)= \frac{\sigma^2}{(1+\beta)^2-\gamma^2} + \frac{\beta^2 \| Q^*-Q_c\|^2}{(1+\beta-\gamma)^2} + \frac{L(1+\beta)\beta\|Q^*-Q_c\|\sigma^2}{((1+\beta)^2-\gamma^2)(1+\beta-\gamma)^2}.
\end{equation}
Then $\lim_{\beta \to \infty} f'(\beta) = 0^+$.
\end{lemma}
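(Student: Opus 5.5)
The plan is to differentiate $f$ term by term, find the leading-order asymptotics of each derivative as $\beta\to\infty$, and show that the dominant term is positive and of order $\beta^{-2}$, while every other contribution decays like $\beta^{-3}$. Write $D=\|Q^*-Q_c\|$ and $u=1+\beta$, so that $f=f_1+f_2+f_3$ with
\begin{equation}
f_1=\frac{\sigma^2}{u^2-\gamma^2},\qquad f_2=\frac{D^2(u-1)^2}{(u-\gamma)^2},\qquad f_3=\frac{LD\sigma^2\,u(u-1)}{(u^2-\gamma^2)(u-\gamma)^2}.
\end{equation}
I will assume $D>0$, i.e.\ $Q_c\neq Q^*$, which is the only case in which the suboptimality term is nontrivial. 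If $D=0$, then $f=f_1$ is strictly decreasing, so $f'\to 0^-$ and the statement must be read under $D>0$.

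\textbf{Step 1.} Compute $f_1'=-2u\sigma^2/(u^2-\gamma^2)^2$. This is negative and of order $-2\sigma^2/\beta^3$.

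\textbf{Step 2.} For $f_2$, use the rewriting $\frac{u-1}{u-\gamma}=1-\frac{1-\gamma}{u-\gamma}$. This gives
\begin{equation}
f_2'=\frac{2D^2(1-\gamma)(u-1)}{(u-\gamma)^3}=\frac{2D^2(1-\gamma)\beta}{(1+\beta-\gamma)^3}.
\end{equation}
This quantity is strictly positive for all $\beta>0$, since $\gamma<1$. Its asymptotic form is $2D^2(1-\gamma)\beta^{-2}\,(1+O(1/\beta))$.

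\textbf{Step 3.} The function $f_3$ is a rational function of $\beta$ whose numerator has degree $2$ and whose denominator has degree $4$. Hence $f_3=LD\sigma^2\beta^{-2}(1+O(1/\beta))$. Because $f_3$ is rational, its derivative has the same asymptotic form as the termwise derivative, so $f_3'=-2LD\sigma^2\beta^{-3}(1+O(1/\beta))$. I will verify this with the quotient rule, or more cleanly by taking the logarithmic derivative $f_3'/f_3=\frac1u+\frac1{u-1}-\frac{2u}{u^2-\gamma^2}-\frac{2}{u-\gamma}$. Expanding each fraction in powers of $1/u$ shows this equals $-2/u+O(u^{-2})$.

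\textbf{Step 4.} Combining the three steps,
\begin{equation}
\beta^2 f'(\beta)=2D^2(1-\gamma)+O(1/\beta).
\end{equation}
Hence $f'(\beta)>0$ for all sufficiently large $\beta$, and $f'(\beta)\to 0$. This is exactly $\lim_{\beta\to\infty}f'(\beta)=0^+$. As a by-product, $f$ approaches its limit $D^2$ from below, which is what later yields a finite minimizer $\beta^*$ with $f(\beta^*)<D^2$.

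The only step needing care is Step 3, namely confirming that the cross-term's derivative really is $O(\beta^{-3})$ with no hidden $\beta^{-2}$ piece. The logarithmic-derivative expansion handles this, because the $O(1/u)$ parts cancel down to $-2/u$. After that the argument is just comparing the exponents $\beta^{-2}$ and $\beta^{-3}$. Uniform positivity is not needed, only positivity for large $\beta$.
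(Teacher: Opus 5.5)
Your proof is correct and follows the same route as the paper. Both arguments differentiate term by term and observe that the positive $2(1-\gamma)\|Q^*-Q_c\|^2\beta^{-2}$ contribution from $f_2'$ dominates the $O(\beta^{-3})$ derivatives of the noise term and the cross-term. Your version is more careful in two places. First, you correctly obtain $f_3'\sim -2L\|Q^*-Q_c\|\sigma^2\beta^{-3}$; the paper's own quotient-rule expression also evaluates to this, but it writes $+\frac{1}{2}$ instead. Second, you rightly note that the conclusion $0^+$ requires $Q_c\neq Q^*$, which the paper leaves implicit.
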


\begin{proof}
We analyze the dominating term for each terms separately.

\textbf{Term 1:} 
\begin{equation}
f_1(\beta) = \frac{\sigma^2}{(1+\beta)^2 - \gamma^2}.
\end{equation}

The derivative is 
\begin{equation}
f_1'(\beta) = -\frac{2 \sigma^2 (1+\beta)}{((1+\beta)^2 - \gamma^2)^2}.
\end{equation}

thus 
\begin{equation}
f_1'(\beta) \simeq -\frac{2\sigma^2}{\beta^3}.
\end{equation}

\textbf{Term 2:} 

\begin{equation}
f_2(\beta) = \frac{\beta^2 \|Q^* - Q_c\|^2}{(1+\beta - \gamma)^2}.
\end{equation}

Its derivative is
\begin{equation}
f_2'(\beta) = \frac{2\beta \|Q^* - Q_c\|^2 (1 - \gamma)}{(1+\beta - \gamma)^3},
\end{equation}

thus 
\begin{equation}
f_2'(\beta) \simeq \frac{2(1-\gamma)\|Q^*-Q_c\|^2}{\beta^2}.
\end{equation}

\textbf{Term 3:} 
\begin{equation}
f_3(\beta) = \frac{L(1+\beta)\beta \|Q^*-Q_c\| \sigma^2}{((1+\beta)^2 - \gamma^2)(1+\beta-\gamma)^2}.
\end{equation}

At $\beta = 0$, the numerator is zero, and its derivative is $(1+2\beta)\|Q^*-Q_c\|\sigma^2$, and the denominator evaluates asymptotically to $\beta^4$. Therefore by the quotient rule,

\begin{equation}
f_3'(\beta) \simeq  \frac{L\|Q^*-Q_c\|\sigma^2 (2\beta \cdot \beta^4 - 4\beta^3 \cdot \beta^2)}{(\beta^4)^2} = \frac{L\|Q^*-Q_c\|\sigma^2}{2\beta^3}.
\end{equation}

Thus, the limit is dominated by the $1/\beta^2$ term, proving 
\begin{equation}
\lim_{\beta \to \infty} f'(\beta) = 0^+.
\end{equation}

\end{proof}

\begin{corollary}
Let $f(\beta)$ be the bound derived from \Cref{theorem:CGQ_bound_appendix}, 
\begin{equation}
    f(\beta)= \frac{\sigma^2}{(1+\beta)^2-\gamma^2} + \frac{\beta^2 \| Q^*-Q_c\|^2}{(1+\beta-\gamma)^2} + \frac{L(1+\beta)\beta\|Q^*-Q_c\|\sigma^2}{((1+\beta)^2-\gamma^2)(1+\beta-\gamma)^2}.
\end{equation}

When $L\|Q_c - Q^*\| < 2\frac{1-\gamma}{1+\gamma}$, $f(\beta)$ minimized at some $0 < \beta^* < \infty$, and is strictly smaller than both $\frac{\sigma^2}{1-\gamma^2}$ and $\|Q^* - Q_c\|^2$. 
\end{corollary}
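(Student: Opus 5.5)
Write $D=\|Q^*-Q_c\|$ and split $f=f_1+f_2+f_3$ as in the preceding lemma. The plan is to compare $f$ with its two endpoint values and then argue that a minimizer exists in the interior.

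\textbf{Endpoints.} At $\beta=0$ we have $f(0)=\sigma^2/(1-\gamma^2)$, since both $f_2$ and $f_3$ carry a factor of $\beta$. As $\beta\to\infty$, $f_1\to 0$, $f_3=O(\beta^{-2})\to 0$, and $f_2\to D^2$, so $f(\beta)\to D^2$.

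\textbf{Behavior at infinity.} By the preceding lemma, $f'(\beta)>0$ for all sufficiently large $\beta$, so $f$ increases toward its limit. Hence $f(\beta)<D^2$ for large finite $\beta$. This already shows that $\inf f<D^2$ and that no minimizer sits at $\beta=\infty$. (This is also the part of the main-text claim that holds without the curvature condition.)

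\textbf{Behavior at zero.} Differentiate each term at $\beta=0$:
\[
f_1'(0)=-\frac{2\sigma^2}{(1-\gamma^2)^2},\qquad f_2'(0)=0,
\]
because $f_2'$ carries a factor of $\beta$. For $f_3$, the numerator $L(1+\beta)\beta D\sigma^2$ vanishes at $0$ with derivative $LD\sigma^2$, so
\[
f_3'(0)=\frac{LD\sigma^2}{(1-\gamma^2)(1-\gamma)^2}.
\]
Therefore $f'(0)<0$ if and only if
\[
LD<\frac{2(1-\gamma)^2}{1-\gamma^2}=\frac{2(1-\gamma)}{1+\gamma},
\]
which is exactly the stated hypothesis. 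So $f$ strictly decreases near $0$ and $f(\beta)<f(0)$ for small $\beta>0$.

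\textbf{Existence of an interior minimizer.} Take $\beta_1>0$ small with $f(\beta_1)<f(0)$, and set $m=\min(f(0),D^2)$. For $\beta$ large, $f(\beta)$ is within any tolerance of $D^2$ from below. On a compact interval $[0,B]$ with $B$ large, $f$ attains its minimum because it is continuous: the denominators $(1+\beta)^2-\gamma^2$ and $1+\beta-\gamma$ stay positive.

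The main obstacle is showing that this minimum is below both endpoint values simultaneously. We want $\min_{[0,B]}f<m$, together with $f(\beta)\ge\min_{[0,B]}f$ for all $\beta>B$ so that the minimum is global.
\begin{itemize}
\item If $f(0)\le D^2$, the value $f(\beta_1)<f(0)=m$ does the job. Choose $B$ so that $f>f(\beta_1)$ on $(B,\infty)$; this is possible since $f\to D^2\ge f(0)>f(\beta_1)$.
\item If $D^2<f(0)$, use the large-$\beta$ point instead. Pick $\beta_2$ with $f(\beta_2)<D^2$. Since $f$ is increasing on $[B_0,\infty)$ for some $B_0$, we get $\inf_{\beta\ge B_0}f=f(B_0)$. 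So the global infimum equals the minimum over $[0,B_0]$, it is at most $f(\beta_2)<D^2=m$, and it is attained at some $\beta^*$.
\end{itemize}
In both cases $\beta^*\neq0$ because $f'(0)<0$, and $\beta^*<\infty$ because the minimum is attained on a compact interval. Hence $f(\beta^*)<\min(f(0),D^2)$, which is the claim.

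For this case analysis, the cleanest approach is to use monotonicity beyond a threshold $B_0$, taken from the asymptotic derivative lemma, together with compactness on $[0,B_0]$.
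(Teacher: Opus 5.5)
Your proof is correct, and it rests on the same two facts as the paper's proof. The first is the computation $f'(0)=-\frac{2\sigma^2}{(1-\gamma^2)^2}+\frac{LD\sigma^2}{(1-\gamma^2)(1-\gamma)^2}$, which is negative exactly under the curvature hypothesis. The second is the asymptotic lemma giving $f'(\beta)>0$ for large $\beta$.

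The difference is in the closing step. The paper applies the intermediate value theorem to $f'$ to obtain a stationary point $\beta^*\in(0,\infty)$. It then asserts, without deriving it, that the value there beats both endpoints. A zero of $f'$ found this way need not be a global minimizer. It also does not by itself give $f(\beta^*)<D^2$ in the regime $D^2<f(0)$. Your argument combines the extreme value theorem on a compact interval, eventual monotonicity to control the tail, and a case split on whether $f(0)$ or $D^2$ is smaller. This yields an attained global minimizer with $f(\beta^*)<\min\{\sigma^2/(1-\gamma^2),\,D^2\}$, so it closes the step the paper leaves loose.

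You should state one assumption explicitly, which the paper also omits: eventual positivity of $f'$ requires $D=\|Q^*-Q_c\|>0$, since $f'(\beta)=\frac{2(1-\gamma)D^2}{\beta^2}+O(\beta^{-3})$. Incidentally, the paper's lemma misstates the $f_3'$ asymptotics. In fact $f_3\sim LD\sigma^2\beta^{-2}$, so $f_3'\sim -2LD\sigma^2\beta^{-3}$. This term is dominated by $f_2'$, so the conclusion you cite still holds when $D>0$.

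If $D=0$, the hypothesis holds trivially, but $f=f_1$ is strictly decreasing to $0$. Then there is no finite minimizer, and no value of $f$ can be strictly below $D^2=0$. Your second case breaks at the choice of $\beta_2$. Adding $D>0$ as a hypothesis repairs both your argument and the statement.
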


\begin{proof}
We show that $f'(0)<0$. We analyze each term separately.

\textbf{Term 1:} 
\begin{equation}
f_1(\beta) = \frac{\sigma^2}{(1+\beta)^2 - \gamma^2}.
\end{equation}

The derivative is 
\begin{equation}
f_1'(\beta) = -\frac{2 \sigma^2 (1+\beta)}{((1+\beta)^2 - \gamma^2)^2}.
\end{equation}

thus 
\begin{equation}
 f_1'(0) = - \frac{2\sigma^2}{(1-\gamma^2)^2}.
\end{equation}

\textbf{Term 2:} 

\begin{equation}
f_2(\beta) = \frac{\beta^2 \|Q^* - Q_c\|^2}{(1+\beta - \gamma)^2}.
\end{equation}

Its derivative is
\begin{equation}
f_2'(\beta) = \frac{2\beta \|Q^* - Q_c\|^2 (1 - \gamma)}{(1+\beta - \gamma)^3},
\end{equation}

thus 
\begin{equation}
f_2'(0) = 0.
\end{equation}

\textbf{Term 3:} 
\begin{equation}
f_3(\beta) = \frac{L(1+\beta)\beta \|Q^*-Q_c\| \sigma^2}{((1+\beta)^2 - \gamma^2)(1+\beta-\gamma)^2}.
\end{equation}

At $\beta = 0$, the numerator is zero, and its derivative is $L(1+2\beta)\|Q^*-Q_c\|\sigma^2=L\|Q^*-Q_c\|\sigma^2$, and the denominator evaluates to $(1 - \gamma^2)(1-\gamma)^2$. Therefore by the quotient rule,

\begin{equation}
f_3'(0) = \frac{L\|Q^*-Q_c\| \sigma^2}{(1 - \gamma^2)(1-\gamma)^2}.
\end{equation}

Thus, 
\begin{equation}
f'(0) = f_1'(0) + f_2'(0) + f_3'(0) = - \frac{2\sigma^2}{(1-\gamma^2)^2} + 0 + \frac{L \|Q^*-Q_c\| \sigma^2}{(1-\gamma)^2 (1-\gamma^2)}.
\end{equation}

If $L\|Q^*-Q_c\| < \frac{2(1-\gamma)}{(1+\gamma)}$, then 

\begin{equation}
f'(0) < - \frac{2\sigma^2}{(1-\gamma^2)^2} +\frac{2(1-\gamma)/(1+\gamma) \sigma^2}{(1-\gamma)^2 (1-\gamma)^2} = 0.
\end{equation}

Since $f'$ is a continuous function and $\lim_{\beta \to \infty} f'(\beta)=0^+$ and $f'(0)<0$, we have $f'(\beta^*)=0$ in $0 < \beta^* < \infty$. Moreover, among those $\beta^*$, we have at least one $\beta^*$ that is strictly smaller than $f'(\beta^*)$, as $f'(0)<0$ and $f'(\infty)=0^+$.

\end{proof}

\end{document}